\documentclass{article} 
\usepackage{arxiv_style,times}

\usepackage{amsmath,amsfonts,bm}

\def\eqref#1{equation~\ref{#1}}

\def\1{\bm{1}}

\DeclareMathAlphabet{\mathsfit}{\encodingdefault}{\sfdefault}{m}{sl}
\SetMathAlphabet{\mathsfit}{bold}{\encodingdefault}{\sfdefault}{bx}{n}

\usepackage{hyperref}
\usepackage{url}
\usepackage{bbm}
\usepackage{booktabs}
\usepackage{arydshln}
\usepackage{booktabs}
\usepackage{multirow}
\usepackage{graphicx}
\usepackage[normalem]{ulem}
\usepackage{mathtools}
\usepackage{wrapfig} 
\usepackage{paralist}
\usepackage{caption}
\usepackage{algorithm}
\usepackage{algpseudocode}

\newcommand{\ours}{ReQRL}

\newcommand{\best}[1]{{\boldmath
  \setbox0=\hbox{#1}%
  \rlap{\copy0}\raisebox{-1.8pt}[0pt][0pt]{\rule{\wd0}{0.4pt}}}}
\newcommand{\second}[1]{{\boldmath #1}}

\title{Learning Goal-Reaching Quasimetric \\Geometry from Finite-Time Reachability}

\author{Daisuke Yamada \& Travis Pence \\
Department of Computer Science\\
University of Wisconsin--Madison\\
\texttt{\{dyamada2, tnpence\}@wisc.edu} \\
\And
Vikas Singh \\
Department of Biostatistics and Medical Informatics \\
University of Wisconsin--Madison \\
\texttt{vsingh@biostat.wisc.edu} \\
}

\usepackage{amsthm}
\theoremstyle{plain}              
\newtheorem{theorem}{Theorem}[section]
\newtheorem{lemma}[theorem]{Lemma}

\theoremstyle{definition}         

\theoremstyle{remark}             

\iclrfinalcopy 
\begin{document}
\maketitle

\begin{abstract}
In goal-conditioned reinforcement learning (GCRL), quasimetric learning models goal-reaching costs as quasimetric distances, connecting local constraints to global value geometry. Its local constraints, however, should reflect the direction-dependent effects of control composition over a finite horizon together with environmental feasibility. We propose \emph{\ours{}}, which constrains the critic's value gradients through finite-horizon reachability. Drawing on state-constrained optimal control, we decouple dynamical reachability from boundary geometry, estimating both from data. On \texttt{OGBench}, our method outperforms or rivals existing quasimetric approaches and other offline GCRL methods.
\end{abstract}
\section{Introduction} \label{sec:intro}

Goal-conditioned reinforcement learning (GCRL) provides a framework for learning a shared policy across many goals, where experience collected while pursuing one goal can help the agent generalize to reach others. Specifying tasks through goals also reduces reliance on task-specific reward shaping \citep{schaul2015universal,andrychowicz2017hindsight}. A central challenge is learning optimal goal-conditioned values that describe the minimum cost of reaching each goal and guide policy learning and planning \citep{pong2018temporal,park2023hiql}. For instance, reaching a goal may require a long detour because an obstacle blocks the direct path or the available controls do not allow the agent to move directly toward it. Thus, the value function, or \emph{critic}, must reflect this global cost of reaching a goal, accounting for all the composed actions needed to follow the optimal feasible path. 

The goal-reaching costs above have a useful structure. They are nonnegative and zero when already at the goal, but may differ with the direction of travel. Moreover, the optimal cost of reaching the goal $g$ from a state $s$ cannot exceed the cost of going through an intermediate state $s'$. These properties precisely make the optimal goal-reaching costs $-V^\ast(s,g)$ a {\em quasimetric}. This distance structure has motivated various approaches \citep{wang2023optimal,myers2026offline,giammarino2026goal}, which restrict the critic's hypothesis class through quasimetric parameterization. In particular, to learn an environment's reaching costs, Quasimetric RL (QRL) \citep{wang2023optimal} imposes local constraints that keep predicted distances consistent with the costs of observed feasible transitions $(s,s')$.  On the other hand, EikHiQRL \citep{giammarino2026goal} directly constrains gradient magnitudes to regulate how rapidly its predicted distances change with state. The appeal of the quasimetric structure is that it links, or \emph{stitches}, reaching costs through intermediate states, so local supervision can inform global goal-reaching distance estimates.

Quasimetric parameterization links reaching costs across states, but does not specify which local changes in the state are physically possible or which paths are blocked. \emph{Local constraints} help convey this information to the critic, i.e., the learned value geometry depends on what these local constraints tell the critic about the environment's \emph{(i) control dynamics} (how actions change the state) and \emph{(ii) domain feasibility} (how the environment restricts feasible paths). Although QRL and EikHiQRL provide effective local supervision, their constraints remain limited. In practice, QRL enforces its local bounds on sparse sampled transitions in data. This leaves the critic to infer how its value should change along {\em other} feasible motions. EikHiQRL constrains local variation through a simplified isotropic dynamics model without an explicit boundary condition, which may impose a local distance geometry that does not match the agent's directional control capabilities or the domain's feasibility.

\setlength{\columnsep}{14pt}
\begin{wrapfigure}{r}{0.3\textwidth}
  \centering
  \vspace{-18pt}
  \includegraphics[width=\linewidth]{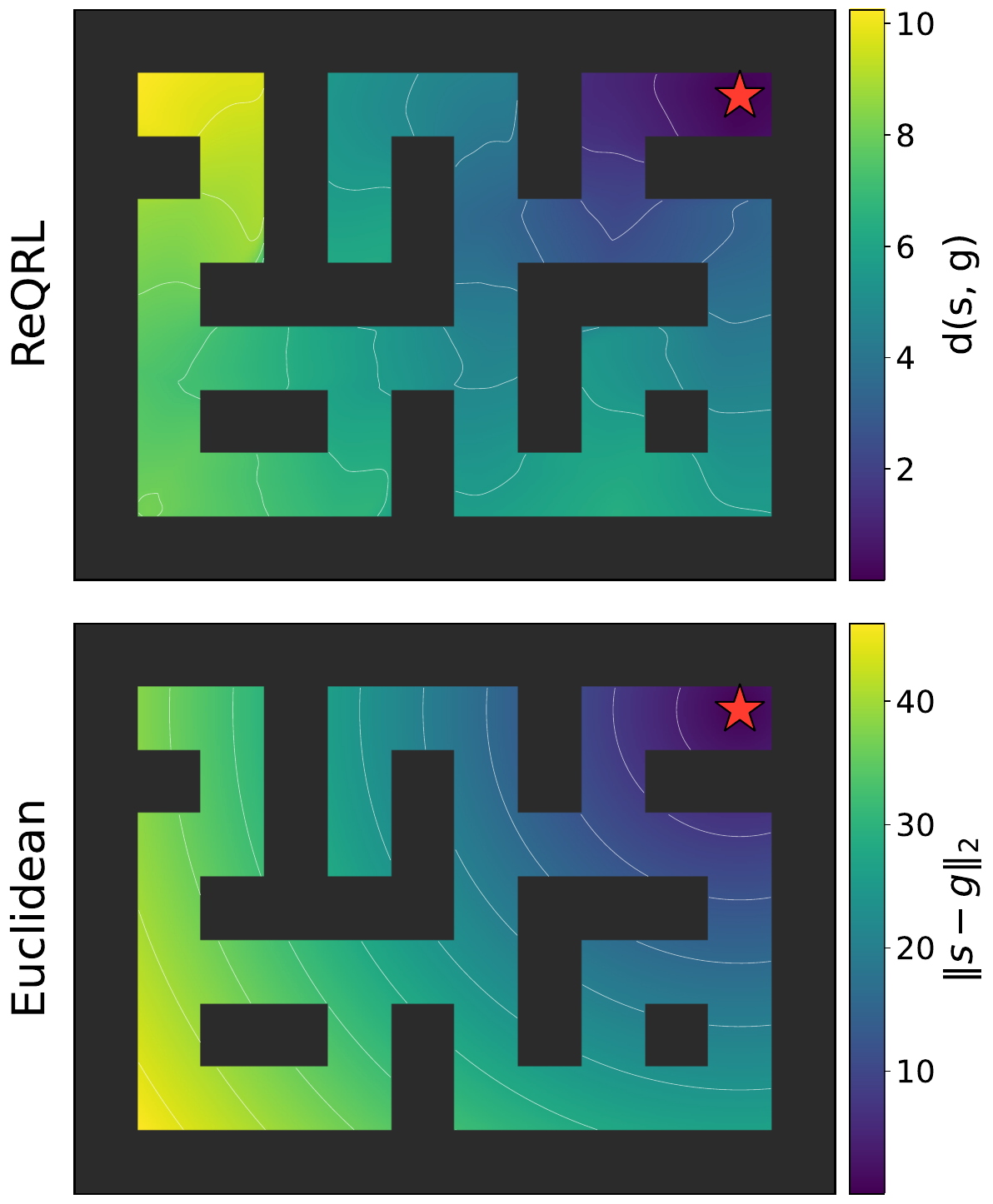}
  \captionsetup{font=footnotesize}
  \caption{\ours{}'s value geometry (top) and Euclidean (bottom). Colors for distance; dark regions denote obstacles, stars the goal.}
  \label{fig:distance-heat}
  \vspace{-18pt}
\end{wrapfigure} 

\textbf{This paper.} 
We ask: {\em can we provide local critic supervision that faithfully reflects both the system's control dynamics and domain feasibility?} To answer this, we show this supervision need {\em not} rely only on instantaneous control directions but can instead retain the effects of controls composed over time. Geometric control theory \citep{jean2014control} explains why: composing controls over a finite duration can produce state changes unavailable instantaneously. We capture these state changes through a \emph{finite-horizon reachable set}, and derive a \emph{finite-horizon Hamiltonian} encoding these effects in a gradient constraint by linearizing only the critic. To make reachability estimation tractable from transition data, we then draw on state-constrained Hamilton-Jacobi theory \citep{soner1986optimal} to decouple dynamical reachability from boundary geometry. We thus supply two complementary local training signals: (i) what composed controls can achieve dynamically and (ii) how domain boundaries restrict feasible progress toward a goal. Together they let the critic capture global goal-reaching geometry (Figure~\ref{fig:distance-heat}).

\vspace{3pt}
Our contributions: \textbf{(a)} we formulate quasimetric local constraints through a finite-horizon Hamiltonian retaining the effects of composed controls and decouple dynamical reachability from boundary feasibility using state-constrained Hamilton-Jacobi theory, \textbf{(b)} introduce \textbf{\ours{}}, a GCRL instantiation estimated from offline transition data, with closed-form zonotopic reachability constraints and boundary regularization, and \textbf{(c)} on \texttt{OGBench}, \ours{} largely outperforms QRL and EikHiQRL in success rate, matching or exceeding recent state-of-the-art GCRL methods on many tasks.

\section{Background}\label{sec:background}
\textbf{Goal-conditioned RL.}
Consider a deterministic goal-conditioned MDP $(\mathcal S, \mathcal A, \mathcal T, \rho_0, \rho_g, R, \gamma=1)$, where $\mathcal S \subseteq \mathbb R^n$ and $\mathcal A \subseteq \mathbb R^m$ are the state and action spaces. $\mathcal T\!:\mathcal S\times\mathcal A\to\mathcal S$ is the deterministic transition map, and $R(s,g)\coloneq -\mathbbm 1\{s\neq g\}$ is the sparse goal-conditioned reward. Let $\rho_0$ and $\rho_g$ denote the initial state and goal distributions. A goal-conditioned policy $\pi(\cdot\mid s,g)$ has value
\[
    V^\pi(s,g)\coloneq \mathbb E_{\pi}\!\left[\sum_{t=0}^{T-1} R(s_t,g) \,\middle|\, s_0=s\right],
\]
where $T$ is the first goal-hitting time. The policy aims to maximize $\mathbb E_{s_0\sim\rho_0,g\sim\rho_g}\left[V^{\pi}(s_0,g)\right]$, and the optimal value is $V^\ast(s,g)=\max_\pi V^\pi(s,g)$. In the offline setting, we learn from a fixed dataset $\mathcal D=\{\tau_i\}_{i=1}^N$ of trajectories $\tau_i=(s_0^i,a_0^i,s_1^i,\ldots,s_{T_i}^i)$ collected by an unknown behavior policy.

\textbf{Quasimetric RL.}
A \emph{quasimetric} is a function $d\!:\mathcal S\times\mathcal S\to\mathbb R_{\geq0}$ with $d(s,s)=0$ and the triangle inequality $d(s_1,s_3)\leq d(s_1,s_2)+d(s_2,s_3)$. It need not be symmetric. Let us denote by $\mathcal Q(\mathcal S)$ the class of all quasimetrics on $\mathcal S$. QRL \citep{wang2023optimal} observes that, under deterministic transitions in the undiscounted (i.e., $\gamma=1$) unit-cost goal-reaching setting, the negative optimal value $d^\ast(s,g):=-V^\ast(s,g)$ is the directed shortest-path distance from $s$ to $g$, and hence $d^\ast\in\mathcal Q(\mathcal S)$.  

QRL reframes GCRL as an optimization over $\mathcal Q(\mathcal S)$. They parameterize a critic as $d_\theta\in\mathcal Q(\mathcal S)$ using an Interval Quasimetric Embedding (IQE) \citep{wang2022improved}, whose architecture guarantees $d_\theta \in \mathcal Q(\mathcal S)$ for every $\theta$, and learn it by maximizing distances between $(s,g)$ pairs while constraining every observed transition by its local cost:
\begin{equation}
    \max_\theta\ \mathbb E_{s,g}[d_\theta(s,g)] \qquad \mathrm{s.t.}\qquad d_\theta(s,s')\leq 1,\quad \forall(s,a,s')\in\mathcal D.
    \label{eq:qrl}
\end{equation}

By the triangle inequality, these local constraints bound $d_\theta(s,g)$ by the cost of any path composed of constrained transitions. When optimizing over $\mathcal Q(\mathcal S)$ with sufficient transition and state--goal coverage, this objective makes the bound tight at the shortest path and recovers $d_{\theta^\ast}=-V^\ast$. The learned value can then be used for planning or policy learning, e.g., a DDPG-style actor \citep{lillicrap2015continuous} or advantage-weighted regression (AWR) \citep{peng2019advantage}.

EikQRL (Eikonal-QRL) \citep{giammarino2026goal} derives local constraints via a continuous-time limit. They assume deterministic Lipschitz dynamics admitting $\mathcal T(s,a)=s+f(s,a)\Delta t+o(\Delta t)$ with time step $\Delta t$ and velocity $f(s,a)$, unit running cost until reaching the goal, and a finite locally Lipschitz optimal value. Taking $\Delta t{\to}0$ yields the Hamilton-Jacobi-Bellman equation, which reduces under unit-speed isotropic dynamics, $f(s,a)=a$ with $\Vert a\Vert_2\leq1$, to the Eikonal equation $\Vert \nabla_{s} d^{\ast}\Vert_2 = 1$ (distance grows at unit rate). EikQRL imposes this softly with tolerance $\epsilon > 0$:
\begin{equation}
    \max_\theta\ \mathbb E_{s,g}[d_\theta(s,g)] \qquad \mathrm{s.t.}\qquad \mathbb E_{s,g}\left[ \left(\Vert\nabla_s d_\theta(s,g)\Vert_2-1\right)^2 \right]\leq\epsilon^2. 
    \label{eq:eikqrl}
\end{equation}
This gradient constraint is denser than QRL's, as it applies at all sampled $(s,g)$ pairs rather than along observed transitions in $\mathcal D$. However, the isotropic reduction of dynamics assumes an agent can move at unit speed equally in every direction at every state.  This also constrains the critic's local variation in the \emph{interior} but does not explicitly enforce boundary feasibility. In effect, the constraint acts as a dynamics-agnostic regularizer on the critic gradient in the domain interior. In the implementation, EikHiQRL combines this with HIQL's temporal abstraction \citep{park2023hiql}. 
\section{Going from Control Composition to Critic Supervision}\label{sec:method}

\setlength{\columnsep}{14pt}
\begin{wrapfigure}[23]{r}{0.3\textwidth}
  \centering
  \vspace{-5pt}
  \includegraphics[width=\linewidth]{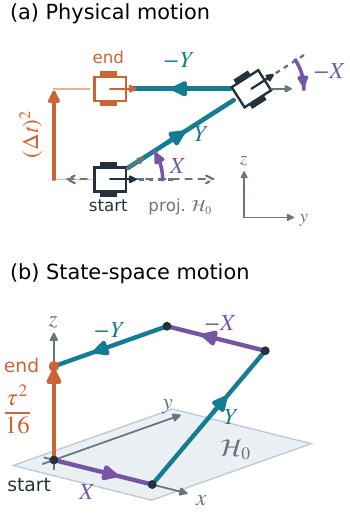}
  \vspace{-15pt}
  \captionsetup{font=footnotesize}
  \caption{Composed controls produce sideways motion (a), with net state change outside the initial instantaneous span $\mathcal H_0$ (b).}
  \label{fig:exp-ctrl-comp}
\end{wrapfigure}

Quasimetric structure determines local information propagation through intermediate states. We ask, \emph{what information about feasible motion should local constraints supply to the critic?} The answer depends on the time scale at which dynamics are viewed. We focus on deterministic control systems with $\mathcal S$ and $\mathcal A$. 

Instantaneous controls restrict motion to a few directions, but composing controls over a nonzero duration make additional directions reachable. We begin with a simple geometric control example.

\textbf{Example.}
Consider a simple planar motion controlled by turning and moving forward or backward with state $s=(x,y,z)$ representing its orientation angle, forward position, and sideways position. For small heading angles, a simplified model is
\[
    \dot x=a_1,\qquad \dot y=a_2,\qquad \dot z=xa_2, \qquad \Vert a\Vert_2\leq1,
\]
where $a_1$ controls turning and $a_2$ controls motion along the current heading. Thus, the control fields are $X(s)=(1,0,0)^\top$ and $Y(s)=(0,1,x)^\top$. At the origin, every instantaneous velocity has the form $(a_1,a_2,0) \in \mathcal H_0 := \operatorname{span}\{X(0),Y(0)\}$, so no control produces instantaneous sideways motion. Let us evolve the state by following $X,Y,-X,$ and $-Y$  each for duration $\Delta t>0$. After the duration $\tau = 4\Delta t$, the system reaches
\[
    (0,0,0)\xrightarrow{X}(\Delta t,0,0)
    \xrightarrow{Y}\left(\Delta t,\Delta t,(\Delta t)^2\right)
    \xrightarrow{-X}\left(0,\Delta t,(\Delta t)^2\right)
    \xrightarrow{-Y}\left(0,0,(\Delta t)^2\right).
\]
The sequence restores $x$ and $y$ while retaining a net change $(\Delta t)^2$ in $z$ (Figure~\ref{fig:exp-ctrl-comp}(a)). Therefore, control composition \emph{exposes} a reachable direction which was absent from the initial instantaneous control subspace $\mathcal H_0$ (Figure~\ref{fig:exp-ctrl-comp}(b)). This net change equals $(\Delta t)^2[X,Y](0)$, where the \emph{Lie bracket}
\[
    [X,Y](s):=DY(s)X(s)-DX(s)Y(s)=(0,0,1)^\top \notin \mathcal H_0
\]
characterizes this additional component where $DX$ and $DY$ are the field Jacobians \citep{agrachev2020comprehensive}. Over duration $\tau$, the state in $z$ evolves by $\tau^2/16$ with average velocity $(0,0,\tau/16)^\top$. Thus, the composed controls \emph{produce sideways motion at every finite $\tau > 0$}, but the average velocity vanishes as we take the limit $\tau \to 0$.

\emph{What does this illustrate?} Composing controls over a finite duration can produce motion outside the initial instantaneous control subspace \citep{jean2014control}. If we locally constrain the critic gradient $\nabla_s d_\theta$ only along control directions available instantaneously as $\tau \to  0$, the resulting condition stays insensitive to gradient components {\em orthogonal} to this subspace. However, a finite horizon $\tau > 0$ changes which components of the critic gradient a local constraint responds to. This motivates \emph{constraining the critic $d_\theta$ explicitly with information about motions achievable through control composition}.

\textbf{How does control composition shape goal-reaching values?}
Consider the undiscounted minimum-time problem with unit cost per unit time and a well-defined deterministic controlled evolution. For a supported finite duration $\tau>0$, let $\mathcal U_\tau(s)$ denote the \emph{feasible} controls of duration $\tau$ from $s$ using actions in $\mathcal A$. Each control determines a unique endpoint $\mathcal T_\tau(s,u)\in\mathcal S$. The standard Bellman optimality \citep{liberzon2011calculus} gives
\begin{equation}
    d^\ast(s,g) = \tau+\inf_{u\in\mathcal U_\tau(s)} d^\ast(\mathcal T_\tau(s,u),g), \qquad d^\ast(s,g)>\tau,
    \label{eq:tau-bellman}
\end{equation}
with $d^\ast(g,g)=0$. In (\ref{eq:tau-bellman}), the optimal cost is the elapsed duration $\tau$ plus the minimum remaining cost from a reachable endpoint. Notice the horizon $\tau$ determines how long controls are composed before evaluating the remaining cost. The endpoint map $u\mapsto\mathcal T_\tau(s,u)$ tells us which states can be reached from $s$ by composing controls over duration $\tau$ under \emph{both the environment's dynamics and its feasibility constraints}. These are exactly the states from which Bellman optimality evaluates the remaining goal-reaching cost, connecting composed controls to the value we aim to learn. Its image, translated by $-s$, gives the \emph{reachable set}
\begin{equation}
    \mathcal R_\tau(s)
    :=\{\mathcal T_\tau(s,u)-s:u\in\mathcal U_\tau(s)\}.
    \label{eq:true-reachability}
\end{equation}
Since the control enters (\ref{eq:tau-bellman}) through its endpoint, we can rewrite this standard condition as
\begin{equation}
    d^\ast(s,g) = \tau+\inf_{\delta\in\mathcal R_\tau(s)} d^\ast(s+\delta,g).
    \label{eq:tau-bellman-reachability}
\end{equation}
This relation requires the greatest descent in remaining cost over horizon $\tau$ to equal $\tau$. Our construction uses it to turn finite-horizon reachability into explicit supervision of the critic's local gradient.

\setlength{\columnsep}{14pt}
\begin{wrapfigure}[16]{r}{0.37\textwidth}
  \centering
  \vspace{-15pt}
  \includegraphics[width=\linewidth]{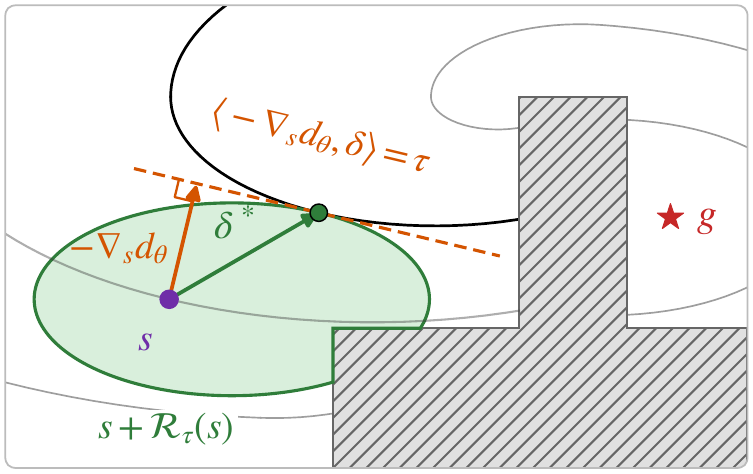}
  \vspace{-15pt}
  \captionsetup{font=footnotesize}
  \caption{Toy illustration of (\ref{eq:tau-hj}), with $d_\theta\approx d^*$ in value and gradient and $d^*$ affine along $[s,s+\delta^*]$. The orange line and black cost contour approximately touch the green reachable set at $s+\delta^*$. Contours are spaced by $\tau$.}
  \label{fig:reach-constraint}
\end{wrapfigure}

\textbf{Constraining the critic through a finite-horizon Hamiltonian.} 
With (\ref{eq:tau-bellman-reachability}) as our starting point, we seek to turn the relation into a tractable local training signal for the critic $d_\theta$. Unfortunately, enforcing it directly requires solving a nonlinear optimization to minimize the critic over reachable endpoints for each sampled $(s,g)$ pair \citep{rawlings2020model}. Instead, we simplify this inner optimization by \emph{linearizing only the critic's dependence on its endpoint} as 
\[
    d_\theta(s+\delta,g) = d_\theta(s,g) + \nabla_s d_\theta(s,g)^\top\delta + o(\Vert\delta\Vert).
\]
Crucially, this first-order approximation acts on the critic, while $\mathcal R_{\tau}(s)$ still contains the endpoints generated by composed controls. Thus, a gradient evaluated at a single state can receive supervision along directions that become reachable through control composition. At states where $d_\theta(\cdot,g)$ is differentiable, this approximation gives our \emph{finite-horizon Hamiltonian}
\begin{equation}
    H_\tau(s,\nabla_s d_\theta(s,g)) := \frac{1}{\tau} \sup_{\delta\in\mathcal R_\tau(s)} \langle-\nabla_s d_\theta(s,g),\delta\rangle-1. 
    \label{eq:finite-horizon-hamiltonian}
\end{equation}
As in classical minimum-time control, the Hamiltonian balances predicted cost descent against the running cost \citep{bardi1997optimal}. Similarly here (Figure~\ref{fig:reach-constraint}), matching the predicted descent to the elapsed cost in (\ref{eq:tau-bellman-reachability}) gives the first-order Bellman surrogate 
\begin{equation}
    H_\tau(s,\nabla_s d_\theta(s,g))=0 \quad\Longleftrightarrow\quad \sup_{\delta\in\mathcal R_\tau(s)} \langle-\nabla_s d_\theta(s,g),\delta\rangle=\tau.
    \label{eq:tau-hj}
\end{equation}
This expresses the descent condition in (\ref{eq:tau-bellman-reachability}) under the critic's first-order approximation. Going back to the example, the control sequence and its reverse at $s=0$ produce sideways motions $(0,0,\pm\tau^2/16)$ over duration $\tau$.  Since (\ref{eq:tau-hj}) requires the predicted descent along every reachable motion to be at most $\tau$, these motions imply
\[
    H_\tau(0,\nabla_s d_\theta(0,g)) + 1 \ge \frac{\tau}{16} |\partial_z d_\theta(0,g)| \implies |\partial_z d_\theta(0,g)| \le \frac{16}{\tau}.
\]
The critic's sideways gradient component thus enters the finite-horizon constraint, together with the other two components through $X$ and $Y$. Therefore, at $s=0$, \emph{all three components of $\nabla_s d_\theta$ enter the local constraint through motions reachable over a horizon $\tau>0$}, provided $d_\theta(\cdot,g)$ is differentiable there (our critic is almost everywhere differentiable). Through $\mathcal R_\tau(s)$, we can supply the critic with explicit information about both local control dynamics and feasibility when learning the global goal-reaching geometry. Notice, however, that $\mathcal R_\tau(s)$ combines two distinct sources of structure: (i) what the controls can generate and (ii) which of those motions actually remain feasible within the domain. In the next section, we separate their roles in the critic constraint.

\section{Decoupling: Dynamics in Interior, Feasibility at Boundary}
\label{sec:sec4}
Our formulation above combines two questions:
\begin{inparaenum}[\bfseries (i)]
    \item what state changes can the controls generate over duration $\tau$ and
    \item which trajectories remain feasible in $\mathcal S$?
\end{inparaenum}
Estimating this jointly from data requires accounting for how composed controls interact with environmental constraints over the full horizon \citep{tebjou2023data}. State-constrained Hamilton-Jacobi theory \citep{soner1986optimal,aubin2011viability} suggests a natural way to approach this difficulty: the ambient control dynamics give the interior Hamiltonian, and the geometry of the feasible domain shows up via a boundary viability condition. Following this structure, we approximate the finite-horizon reachability constraints using separate estimates of dynamical reachability for interior constraints and domain geometry for the boundary constraint. We will assume the underlying dynamics are defined independently of $\mathcal S$. 

\begin{figure}[t]
\centering
\includegraphics[width=0.9\linewidth]{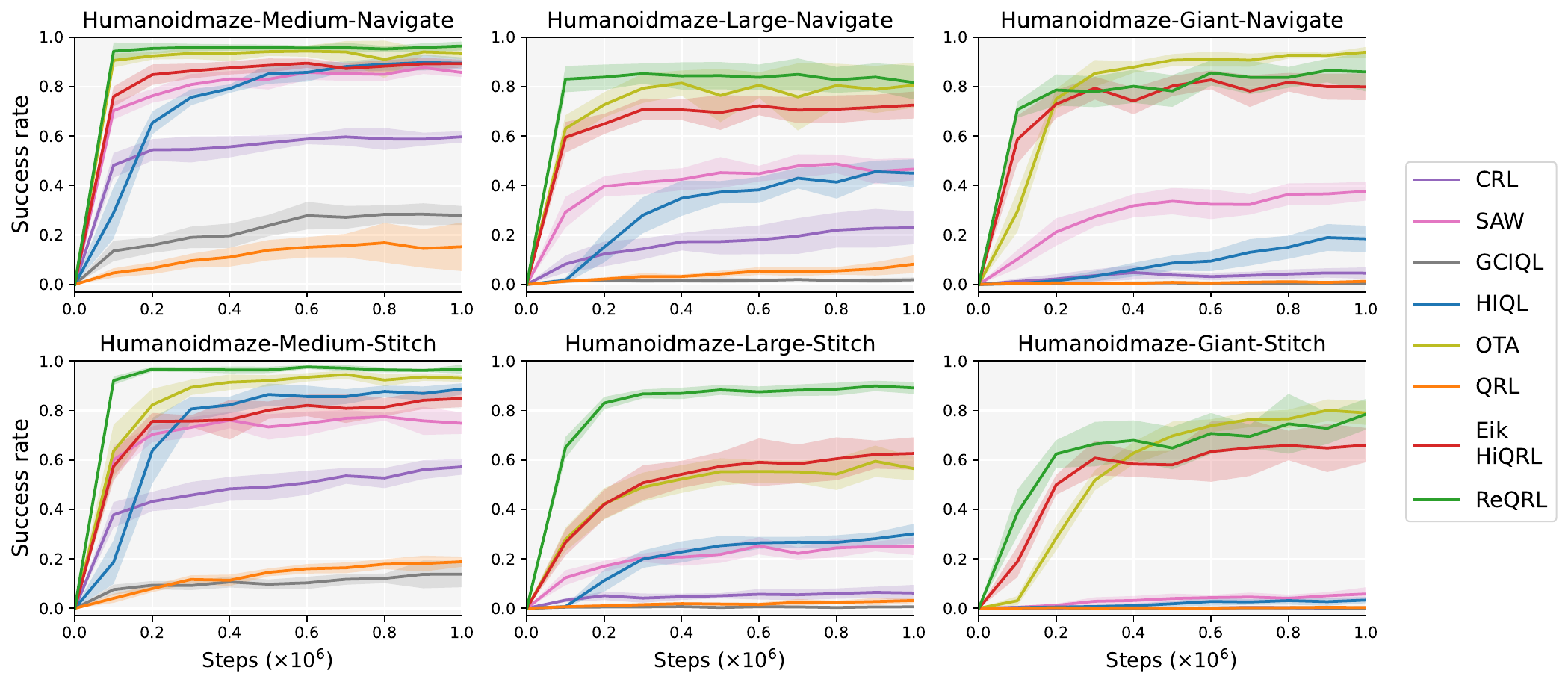}
\vspace{-7pt}
\caption{Evaluation curves on \texttt{humanoidmaze} for \texttt{navigate} and \texttt{stitch} datasets of all three sizes. The success rate with std shading is plotted every $10^5$  training steps up to $10^6$.}
\label{fig:humanoid-eval-curve}
\vspace{-6pt}
\end{figure}

\textbf{Setting up dynamical reachability.}
Let $\mathcal U_\tau^{\mathrm{dyn}}(s)$ denote the controls over duration $\tau$ using actions in $\mathcal A$ without requiring the trajectory to remain feasible in $\mathcal S$. Let us write $\mathcal T_\tau^{\mathrm{dyn}}(s,u)\in\mathbb R^n$ for its endpoint. Every feasible control is dynamically admissible, so $\mathcal U_\tau(s)\subseteq\mathcal U_\tau^{\mathrm{dyn}}(s)$. Under the stated assumption above, we have $\mathcal T_\tau^{\mathrm{dyn}}(s,u)=\mathcal T_\tau(s,u)$ for $u\in\mathcal U_\tau(s)$. Define
\begin{equation}
    \mathcal R_\tau^{\mathrm{dyn}}(s) :=\left\{\mathcal T_\tau^{\mathrm{dyn}}(s,u)-s: u\in\mathcal U_\tau^{\mathrm{dyn}}(s)\right\}.
    \label{eq:rk-dyn}
\end{equation}
This set describes what control dynamics permit over the horizon $\tau$ independently of the domain geometry, allowing the estimation of dynamical reachability and boundary feasibility separately. Assuming $\mathcal R_\tau^{\mathrm{dyn}}(s)$ is nonempty and compact, define the \emph{dynamical finite-horizon Hamiltonian} as
\[
    H_\tau^{\mathrm{dyn}}(s,p) :=\frac{1}{\tau} \sup_{\delta\in\mathcal R_\tau^{\mathrm{dyn}}(s)} \langle-p,\delta\rangle-1, \qquad p\in\mathbb R^n.
\]
In \citet{soner1986optimal}, the unconstrained dynamics determine the \emph{interior} equation, while the state constraint enters through the \emph{boundary}. Following the interior part of the state-constrained Hamilton-Jacobi structure, we impose the interior constraint $H_\tau^{\mathrm{dyn}}(s,\nabla_s d_\theta(s,g))=0$, or equivalently
\begin{equation}
    \sup_{\delta\in\mathcal R_\tau^{\mathrm{dyn}}(s)}
    \left\langle-\nabla_s d_\theta(s,g),\delta\right\rangle
    =\tau, \qquad s\in\mathcal S^\circ\setminus\{g\}.
    \label{eq:interior-const}
\end{equation}

\textbf{Setting up boundary feasibility.}
We incorporate domain geometry with a separate boundary condition. At a regular point $s\in\partial\mathcal S$, let $\nu(s)$ denote the outward unit normal. For solutions $C^1$ up to the boundary, the state-constraint condition of \citet{soner1986optimal} requires that adding any nonnegative outward-normal component to the covector cannot decrease the Hamiltonian. Adapting this gives $H_\tau^{\mathrm{dyn}}(s,p) \leq H_\tau^{\mathrm{dyn}}(s,p+\beta\nu(s))$ for all $\beta\geq0$ with $p = \nabla_s d_{\theta}(s,g)$. Assuming $\mathcal R_\tau^{\mathrm{dyn}}(s)$ is compact and its support maximizer $\delta^\ast$ is unique, convexity of the Hamiltonian in $p$ makes the above equivalent to
\begin{equation}
    \nu(s)^\top\delta^\ast\leq0, \qquad s \in \partial \mathcal S.
    \label{eq:anisotropic-boundary-simple}
\end{equation}
Thus, the reachable motion maximizing the critic’s predicted descent must be tangent to or point into the domain. For the isotropic case $\mathcal R_\tau^{\mathrm{dyn}}(s)=\tau\mathbb B_2$, with $\mathbb B_2$ the closed unit Euclidean ball, and $p\neq0$, we have $\delta^\ast=-\tau p/\Vert p\Vert_2$, so (\ref{eq:anisotropic-boundary-simple}) reduces to
\begin{equation}
    \nu(s)^\top\nabla_s d_\theta(s,g)\geq0.
    \label{eq:isotropic-boundary}
\end{equation}
\emph{Remark.} We account for feasibility through a local boundary condition, while (\ref{eq:tau-hj}) uses reachability that respects constraints over the full local horizon $\tau$. Under the stated assumptions, the constrained and unconstrained maxima agree whenever an unconstrained maximizing control remains feasible throughout that horizon. The approximation is practically motivated when trajectories remain close to their starting states relative to local geometric scales. We formalize the conditions for agreement and the possible gap in Appendix~\ref{app:proof}, and Section~\ref{sec:exp} shows its effectiveness in practice.

\textbf{Critic learning objective.}
Putting these two together, while we retain QRL's quasimetric parameterization and distance-maximization objective \citep{wang2023optimal}, we obtain our critic objective:
\begin{equation}
    \begin{aligned}
        \max_\theta \quad & \mathbb E_{s,g}[d_\theta(s,g)] \\
        \mathrm{s.t.}\quad
        & H_\tau^{\mathrm{dyn}}(s,\nabla_s d_\theta(s,g))=0,
        \quad s\in\mathcal S^\circ\setminus\{g\},\\
        & \nu(s)^\top\nabla_s d_\theta(s,g)\geq0,
        \quad s\in\partial\mathcal S \setminus\{g\}.
    \end{aligned}
    \label{eq:our-objective}
\end{equation}
We use the isotropic boundary form as a geometric regularizer discouraging outward Euclidean descent without estimating a maximizing motion at the boundary. This simplifies boundary enforcement while retaining anisotropic reachability in the interior. In practice, we use soft constraint penalties and QRL's monotone distance transform (Appendix~\ref{app:exp}). Section~\ref{sec:offline-inst} describes our offline instantiation and estimation of dynamical reachability and boundary geometry.
\section{From Transition Data to Critic Constraints}
\label{sec:offline-inst}
In deterministic GCRL, the dataset $\mathcal D$ contains transitions $(s, a, s')$ with $s' = \mathcal T(s,a)$ at a fixed, unknown interaction interval $\Delta t > 0$. To work directly with this data, we instantiate the horizon as $\tau := k\Delta t$ with $k\in\mathbb N$. Controls over this duration are action sequences $u=(a_0,\ldots,a_{k-1})\in\mathcal A^k$, with endpoints obtained by composing $\mathcal T$ for $k$ environment steps. This uses the transition map directly without assuming a continuous-time generator, which need not exist or be unique \citep{palis1974vector,yue2016inverse}. This $k$-step composition is related to temporal abstraction \citep{sutton1999between}; here, the composition defines the reachable set used in the finite-horizon critic constraint.

\textbf{Reachability and cost normalization.}
At $\tau=k\Delta t$, we write $\mathcal R_k^{\mathrm{dyn}}(s):=\mathcal R_{k\Delta t}^{\mathrm{dyn}}(s)$ for the $k$-step reachable sets. We measure goal-reaching costs in units of the fixed horizon $\tau$, dividing every policy's accumulated time cost by the same positive constant, preserving the set of optimal policies exactly. With $d_\theta$ denoting the normalized critic, the objective in (\ref{eq:our-objective}) has the equivalent form
\begin{equation}
    \begin{aligned}
        \max_\theta \quad & \mathbb E_{s,g}[d_\theta(s,g)] \\
        \mathrm{s.t.}\quad
        & \max_{\delta\in\mathcal R_k^{\mathrm{dyn}}(s)}
        \langle-\nabla_s d_\theta(s,g),\delta\rangle=1,
        \quad s\in\mathcal S^\circ\setminus\{g\},\\
        & \nu(s)^\top\nabla_s d_\theta(s,g)\geq0,
        \quad s\in\partial\mathcal S \setminus\{g\}.
    \end{aligned}
    \label{eq:normalized-objective}
\end{equation}
Positive scaling preserves the quasimetric structure and the boundary condition, and this formulation can be evaluated even when the numerical value of $\Delta t$ is unknown. In our instantiation, we parameterize the critic using goal-relevant coordinates $x_s\in\mathbb R^{n_x}$ chosen offline, writing $d_\theta(s,g)=d_\theta(x_s,x_g)$. This restricts the approximation of the full-state value. To estimate reachable motions in these coordinates, we use a richer reduced state $r_s\in\mathbb R^{n_r}$ containing $x_s$ and relevant dynamical variables, and then project the predicted motions onto the critic coordinates. We estimate dynamical reachability and boundary geometry from $\mathcal D$ using an affine dynamics model and empirical state occupancy, respectively. See Appendix~\ref{app:exp} for full details.

\textbf{Estimating dynamical reachability.}
We fit an affine model of the dynamics by ordinary least squares on offline transitions $(s, a, s') \in \mathcal D$ \citep{ljung1999system,verhaegen2007filtering}: $r_{s'}=A_r r_s+B_r a+b_r+w$, with $a\in\mathcal A=[-1,1]^m$, where $w$ represents affine-model discrepancies, including nonlinear effects and omitted state dependence, and ranges over an empirical zonotopic error envelope estimated from residuals. Propagating this model for $k$ steps and retaining the goal-coordinate displacements \citep{girard2005reachability,althoff2021set} yields the estimate
\begin{equation}
    \widehat{\mathcal R}_{k}^{\mathrm{dyn}}(s) =\{c_s+G_k z:\Vert z\Vert_\infty\leq1\}.
    \label{eq:estimated-rk}
\end{equation}
We obtain $c_s$ and $G_k$ by initializing $c=r_s$ with no generators and iterating $c\leftarrow A_r c+b_r$ and $G\leftarrow[A_rG,B_r,G_w^\top]$ for $k$ steps, where the error envelope is $\{G_w^\top\xi:\Vert\xi\Vert_\infty\leq1\}$. We retain the goal-coordinate entries and rows and subtract $x_s$ from the center to obtain displacements. $z$ stacks the action and error coefficients across these steps. This propagation captures multistep directions expressible through the fitted affine model, but is an approximation to the general finite-horizon reachable set. The resulting set is a \emph{zonotope}, whose support function has a closed form \citep{althoff2016combining,althoff2021set}. Thus, the estimated interior constraint in (\ref{eq:normalized-objective}) becomes
\begin{equation}
    -\langle \nabla_{x_s}d_\theta(x_s,x_g),c_s\rangle +\Vert G_k^\top\nabla_{x_s}d_\theta(x_s,x_g)\Vert_1=1.
    \label{eq:zonotope-support}
\end{equation}
We can evaluate this during training without enumerating action sequences or reachable-set vertices.

\begin{table}[t]
\centering
\footnotesize
\setlength{\tabcolsep}{4.5pt}
\renewcommand{\arraystretch}{1.1}
 
\scalebox{.82}{\begin{tabular}{ll l llllll @{\hspace{10pt}} lll}
\toprule
\multirow{2}{*}{\textbf{Env}} & \multirow{2}{*}{\textbf{Data}} & \multirow{2}{*}{\textbf{Size}}
& \multicolumn{6}{c}{\textbf{Other GCRL}} & \multicolumn{3}{c}{\textbf{Quasimetric}} \\
\cmidrule(lr){4-9}\cmidrule(lr){10-12}
& & & \multicolumn{1}{c}{CRL} & \multicolumn{1}{c}{SAW} & \multicolumn{1}{c}{GCIQL} & \multicolumn{1}{c}{HIQL} & \multicolumn{1}{c}{OTA} & \multicolumn{1}{c}{GAS}
& \multicolumn{1}{c}{QRL} & \multicolumn{1}{c}{EikHiQRL} & \multicolumn{1}{c}{\textbf{\ours}} \\
\midrule

\multirow{6}{*}{\rotatebox[origin=c]{90}{\texttt{pointmaze}}}
& \multirow{3}{*}{\texttt{nav}}
& \texttt{medium}   & $65 \pm 10$ & \best{$99 \pm 2$} & $57 \pm 1$ & $92 \pm 3$  & \second{$94 \pm 3$} & $75 \pm 12$         & $79 \pm 9$ & $82 \pm 9$  & \second{$94 \pm 7$} \\
& & \texttt{large}  & $50 \pm 12$ & $60 \pm 14$       & $33 \pm 2$ & $49 \pm 12$ & \second{$97 \pm 3$} & $72 \pm 9$          & $88 \pm 9$ & $83 \pm 13$ & \best{$98 \pm 3$} \\
& & \texttt{giant}  & $26 \pm 10$ & $12 \pm 14$       & $0 \pm 0$  & $6 \pm 7$   & $83 \pm 11$         & \second{$86 \pm 8$} & $73 \pm 7$ & $71 \pm 18$ & \best{$91 \pm 8$} \\
\cmidrule(l){2-12}
& \multirow{3}{*}{\texttt{sti}}
& \texttt{medium}   & $2 \pm 3$ & $70 \pm 14$ & $30 \pm 5$ & $68 \pm 12$ & $81 \pm 7$  & $85 \pm 11$ & $80 \pm 8$  & $87 \pm 12$ & \best{$93 \pm 10$} \\
& & \texttt{large}  & $0 \pm 0$ & $12 \pm 10$ & $27 \pm 7$ & $12 \pm 8$  & $37 \pm 14$ & $87 \pm 13$ & $82 \pm 8$  & $81 \pm 11$ & \best{$96 \pm 8$} \\
& & \texttt{giant}  & $0 \pm 0$ & $2 \pm 6$   & $0 \pm 0$  & $0 \pm 0$   & $9 \pm 10$  & $71 \pm 22$ & $59 \pm 10$ & $60 \pm 20$ & \best{$81 \pm 15$} \\
\midrule

\multirow{8}{*}{\rotatebox[origin=c]{90}{\texttt{antmaze}}}
& \multirow{3}{*}{\texttt{nav}}
& \texttt{medium}   & $76 \pm 5$ & \second{$93 \pm 3$} & $58 \pm 9$ & \second{$91 \pm 3$} & \second{$91 \pm 1$} & \best{$96 \pm 1$}   & $83 \pm 9$  & \second{$95 \pm 2$} & \second{$94 \pm 5$} \\
& & \texttt{large}  & $58 \pm 7$ & $73 \pm 2$ & $20 \pm 4$ & $72 \pm 4$          & \second{$91 \pm 2$} & \second{$92 \pm 3$} & $65 \pm 18$ & $83 \pm 3$          & \best{$95 \pm 1$} \\
& & \texttt{giant}  & $5 \pm 3$  & $11 \pm 4$ & $0 \pm 0$  & $7 \pm 4$           & $35 \pm 3$          & $47 \pm 25$         & $0 \pm 0$   & $56 \pm 9$          & \best{$77 \pm 6$} \\
\cmidrule(l){2-12}
& \multirow{3}{*}{\texttt{sti}}
& \texttt{medium}   & $33 \pm 5$ & $72 \pm 3$ & $38 \pm 4$ & $74 \pm 5$ & \second{$95 \pm 2$} & \best{$97 \pm 1$} & $68 \pm 8$ & \second{$94 \pm 2$}  & \second{$96 \pm 1$} \\
& & \texttt{large}  & $7 \pm 4$  & $23 \pm 5$ & $6 \pm 3$  & $15 \pm 4$ & $59 \pm 11$         & \best{$92 \pm 2$} & $13 \pm 3$ & $81 \pm 6$           & \second{$89 \pm 3$} \\
& & \texttt{giant}  & $0 \pm 0$  & $0 \pm 0$  & $0 \pm 0$  & $0 \pm 0$  & $3 \pm 1$           & \best{$72 \pm 6$} & $0 \pm 0$  & $41 \pm 16$          & \second{$67 \pm 6$} \\
\cmidrule(l){2-12}
& \multirow{2}{*}{\texttt{exp}}
  & \texttt{medium} & $4 \pm 4$ & $31 \pm 9$ & $12 \pm 4$ & $30 \pm 14$ & $90 \pm 12$ & \second{$96 \pm 3$} & $1 \pm 2$ & $45 \pm 21$ & \best{$97 \pm 4$}\\
& & \texttt{large}  & $0 \pm 0$ & $4 \pm 2$  & $0 \pm 1$  & $1 \pm  2$  & $79 \pm 12$ & \best{$92 \pm 4$} & $0 \pm 0$ & $9 \pm 9$   & $83 \pm 6$\\
\midrule

\multirow{6}{*}{\rotatebox[origin=c]{90}{\texttt{humanoid}}}
& \multirow{3}{*}{\texttt{nav}}
& \texttt{medium}   & $60 \pm 2$ & $86 \pm 2$ & $28\pm 4$ & $89 \pm 2$ & \second{$94 \pm 3$} & \second{$94 \pm 6$} & $15 \pm 10$ & $89 \pm 2$ & \best{$96 \pm 2$} \\
& & \texttt{large}  & $23 \pm 7$ & $47 \pm 5$ & $2\pm 1$  & $45 \pm 6$ & $81 \pm 9$          & \best{$87 \pm 5$}   & $8 \pm 4$   & $73 \pm 5$ & \second{$82 \pm 7$} \\
& & \texttt{giant}  & $5 \pm 2$  & $38 \pm 4$ & $0\pm 1$  & $18 \pm 5$ & \best{$95 \pm 2$}   & $86 \pm 3$          & $1 \pm 1$   & $80 \pm 5$ & $86 \pm 8$ \\
\cmidrule(l){2-12}
& \multirow{3}{*}{\texttt{sti}}
  & \texttt{medium} & $57 \pm 3$ & $75 \pm 4$ & $14 \pm 5$ & $89 \pm 2$ & \second{$93 \pm 1$} & \best{$98 \pm 2$}   & $19 \pm 2$ & $85 \pm 4$ & \second{$97 \pm 2$} \\
& & \texttt{large}  & $6 \pm 3$  & $25 \pm 4$ & $1 \pm 0$  & $30 \pm 4$ & $56 \pm 5$          & $82 \pm 5$          & $3 \pm 1$  & $63 \pm 7$ & \best{$89 \pm 2$} \\
& & \texttt{giant}  & $0 \pm 0$  & $6 \pm 3$  & $0 \pm 0$  & $3 \pm 1$  & \second{$79 \pm 5$} & \best{$83 \pm 3$}   & $0 \pm 0$  & $66 \pm 7$ & \second{$79 \pm 6$} \\

\bottomrule
\end{tabular}}
\vspace{-1pt}
\caption{\textbf{Success rate \% ($\uparrow$)}, percentage of episodes in which agent reaches goal $g$ from the starting point $s$ (mean $\pm$ std over 10 seeds of the final evaluation). The agent is tested for 50 episodes per goal. We \textbf{bold} the results within $95\%$ of the best and \underline{underline} the best.}
\vspace{-12pt}
\label{tab:main}
\end{table}

\textbf{Estimating boundary geometry.}
We estimate goal-coordinate boundary geometry by rasterizing offline observations of $x_s$ and applying morphological closing \citep{haralick1987image}. From this occupancy region, we construct a smooth implicit representation $\Phi$ of the estimated boundary, positive inside and negative outside, which gives $\widehat\nu(x_s)=-\nabla\Phi(x_s)/\Vert\nabla\Phi(x_s)\Vert_2$. This follows the level-set characterization of boundary normals \citep{delfour2011shapes}. We enforce the boundary penalty near the estimated boundary, assuming adequate coverage of the feasible region.

\textbf{Finite time scale $k$.}
The horizon $k \ge 1$ sets the duration $\tau=k\Delta t$ over which controls are composed. We choose $k$ by comparing the extent of the control-generated part of $G_k$ with the nominal center displacement $c_s$ in the goal coordinates. The control-generated part comes from the propagated generators $B_r,A_rB_r,\ldots,A_r^{k-1}B_r$ \citep{chen1984linear,althoff2021set}. We choose a horizon at which these two contributions are comparable, so that composed actions can affect the goal coordinates. We fix $k$ within each environment family across dataset types and domain sizes. 
\section{Experiments}\label{sec:exp}

\textbf{Policy learning and control.}
We train an IQE critic using soft penalties for the normalized objective in (\ref{eq:normalized-objective}) \citep{wang2022improved}. The reachable set and boundary normals follow Section~\ref{sec:offline-inst}, with the interior penalty based on the zonotopic support in (\ref{eq:zonotope-support}). Our critic receives \emph{local training signals from motions composed over $\tau=k\Delta t$}. Thus, we supervise an actor at the same horizon, training an actor $\pi_\psi(w\mid x_s,x_g)$ on $k$-step hindsight \emph{waypoints} $w=x_{s_{t+k}}$, truncated at trajectory ends. This aligns waypoint targets with the critic's reachability horizon and learned geometry. The actor predicts an offset from $x_s$ and is trained by capped AWR \citep{peng2019advantage} following HIQL \citep{park2023hiql}, using advantage $A^h(s,w,g) =d_\theta(x_s,x_g)-d_\theta(w,x_g)$. We supply these coordinate waypoints directly to the low-level policy, without any learned waypoint embedding. For $k>1$, a separate value function $V_\eta(s,w)$ takes the full observation and waypoint and is trained by action-free, goal-conditioned expectile TD learning \citep{kostrikov2021offline}. The low-level policy $\pi_\ell(a\mid s,w)$ is then trained by AWR on recorded actions using HIQL's simplified advantage $V_\eta(s',w)-V_\eta(s,w)$. Full training and implementation details appear in Appendix~\ref{app:exp}.

\subsection{Main Benchmarks} 
\begin{wraptable}{r}{0.5\textwidth}
\centering
\footnotesize
\setlength{\tabcolsep}{3pt}
\renewcommand{\arraystretch}{1.1}

\vspace{-40pt}
\scalebox{0.75}{

\begin{tabular}{ll l ll @{\hspace{10pt}} lll}
\toprule
\multirow{2}{*}{\textbf{Env}} & \multirow{2}{*}{\textbf{Data}} & \multirow{2}{*}{\textbf{Size}}
& \multicolumn{2}{c}{\textbf{Other GCRL}} & \multicolumn{3}{c}{\textbf{Quasimetric}} \\
\cmidrule(lr){4-5}\cmidrule(lr){6-8}
& & & \multicolumn{1}{c}{OTA} & \multicolumn{1}{c}{GAS}
& \multicolumn{1}{c}{QRL} & \multicolumn{1}{c}{EikHiQRL} & \multicolumn{1}{c}{\textbf{\ours}} \\
\midrule

\multirow{8}{*}{\rotatebox[origin=c]{90}{\texttt{antmaze}}}
& \multirow{3}{*}{\texttt{nav}}
& \texttt{medium}  & $3 \pm 0$  & $2 \pm 0$ & $5 \pm 1$  & $3 \pm 0$ & $1 \pm 0$ \\
& & \texttt{large} & $3 \pm 0$  & $3 \pm 1$ & $7 \pm 2$  & $5 \pm 0$ & $2 \pm 0$ \\
& & \texttt{giant} & $9 \pm 1$  & $8 \pm 3$ & $14 \pm 1$ & $6 \pm 1$ & $4 \pm 1$ \\
\cmidrule(l){2-8}
& \multirow{3}{*}{\texttt{sti}}
& \texttt{medium}  & $3 \pm 0$  & $2 \pm 0$ & $6 \pm 1$  & $4 \pm 0$ & $1 \pm 0$ \\
& & \texttt{large} & $8 \pm 1$  & $3 \pm 1$ & $10 \pm 1$ & $5 \pm 1$ & $1 \pm 0$ \\
& & \texttt{giant} & $13 \pm 1$ & $5 \pm 1$ & $16 \pm 1$ & $7 \pm 1$ & $4 \pm 1$ \\
\cmidrule(l){2-8}
& \multirow{2}{*}{\texttt{exp}}
& \texttt{medium}  & $2 \pm 1$ & $4 \pm 1$ & $21 \pm 2$ & $13 \pm 3$ & $1 \pm 1$ \\
& & \texttt{large} & $5 \pm 2$ & $5 \pm 1$ & $19 \pm 1$ & $11 \pm 2$ & $1 \pm 1$ \\
\midrule

\multirow{6}{*}{\rotatebox[origin=c]{90}{\texttt{humanoid}}}
& \multirow{3}{*}{\texttt{nav}}
& \texttt{medium}  & $8 \pm 1$ & $7 \pm 2$  & $16 \pm 4$ & $9 \pm 1$  & $3 \pm 0$ \\
& & \texttt{large} & $9 \pm 1$ & $9 \pm 1$  & $18 \pm 3$ & $13 \pm 2$ & $10 \pm 3$ \\
& & \texttt{giant} & $9 \pm 1$ & $11 \pm 1$ & $25 \pm 2$ & $11 \pm 1$ & $7 \pm 1$ \\
\cmidrule(l){2-8}
& \multirow{3}{*}{\texttt{sti}}
& \texttt{medium}  & $7 \pm 0$  & $6 \pm 1$  & $15 \pm 2$ & $10 \pm 1$ & $4 \pm 1$ \\
& & \texttt{large} & $10 \pm 1$ & $9 \pm 1$  & $18 \pm 2$ & $13 \pm 1$ & $6 \pm 0$ \\
& & \texttt{giant} & $12 \pm 1$ & $13 \pm 1$ & $27 \pm 2$ & $13 \pm 1$ & $7 \pm 1$ \\
\bottomrule
\end{tabular}

}
\vspace{-3pt}
\captionsetup{font=footnotesize}
\caption{\textbf{Collision rate (\%)} ($\downarrow$): percentage of time steps with agent-boundary contact in \texttt{MuJoCo} (mean $\pm$ std over 10 seeds). Full results appear in Appendix~\ref{app:add}.}
\label{tab:collision-small}
\vspace{-10pt}

\end{wraptable}

\textbf{Benchmarks and baselines.}
We evaluate on \texttt{OGBench}'s \texttt{pointmaze}, \texttt{antmaze}, and \texttt{humanoidmaze} environments across datasets and sizes \citep{park2025ogbench}. Maze sizes are \texttt{medium}, \texttt{large}, and \texttt{giant} and the three robots are a point, an ant, and a human. Training dataset types are \texttt{navigate}, \texttt{stitch}, and \texttt{explore}. An agent must navigate from its initial position to a specified goal. We call our method \textbf{\ours} (Reachable QRL). We primarily compare with QRL \citep{wang2023optimal} and EikHiQRL \citep{giammarino2026goal} to assess our quasimetric critic contribution. We compare with GCIQL \citep{kostrikov2021offline}, CRL \citep{eysenbach2022contrastive}, HIQL \citep{park2023hiql}, SAW \citep{zhou2026flattening}, OTA \citep{ahn2026option}, and GAS \citep{baek2025graph}. Our reproduced baseline results largely match \texttt{OGBench} and respective papers. See Appendix~\ref{app:exp}.

\textbf{Training and evaluation.}
\ours{} and EikHiQRL share the IQE critic, AWR actor, and low-level TD critic architectures, with common hyperparameters matched except for the method-specific differences mentioned in Appendix~\ref{app:exp} (along training budgets, hyperparameters, evaluation counts, and metric definitions). We report success and wall-collision rates on $10$ seeds with their mean and std.

\textbf{Comparison to quasimetric baselines.} In Table~\ref{tab:main}, \ours{} obtains higher success than QRL and EikHiQRL in almost all settings. For instance, on \texttt{humanoid-large-stitch}, \ours{} reaches $89\%$ while QRL and EikHiQRL reach $3\%$ and $63\%$ respectively. On \texttt{antmaze-giant}, \ours{} improves on EikHiQRL by $21$ points on \texttt{navigate} and $26$ points on \texttt{stitch}, and by $74$ points on \texttt{antmaze-large-explore}. Critic constraints in \ours{} are computed from the identified reachability (i.e., control dynamics and boundaries) of the data without requiring complete successful trajectories, which is consistent with its robustness in \texttt{explore}.

\textbf{Comparison to other GCRL methods.} The competitive non-quasimetric methods are GAS and OTA, both recent state of the art on \texttt{OGBench}. Overall, \ours{} is comparable to or better than both. \ours{} is largely higher on \texttt{pointmaze} datasets and \texttt{antmaze-giant-navigate}, while GAS is higher on \texttt{antmaze-large-explore} and OTA on \texttt{humanoid-giant-navigate}. \ours{} thus closes the gap between quasimetric methods and the state of the art in offline GCRL. Figure~\ref{fig:training-vs-success} further shows \ours{} on the Pareto frontier of training time and success.

\textbf{Convergence and collision rate.}
Figure~\ref{fig:humanoid-eval-curve} shows training curves on \texttt{humanoidmaze}, where \ours{} learns quickly and reaches the best or near-best final success, consistent with the explicit reachability information supplied to the critic. We also study boundary interactions during policy execution through Table~\ref{tab:collision-small}. \ours{} achieves lower mean collision rates than QRL and EikHiQRL, and lower or comparable collision rates compared with OTA and GAS across the reported \texttt{antmaze} and \texttt{humanoidmaze} settings. Full results, including mixed \texttt{pointmaze}, are in Appendix~\ref{app:add}.

\begin{table}[t]
\centering
\begin{minipage}[c]{0.35\linewidth}
  \centering
  \setlength{\fboxsep}{0pt}\setlength{\fboxrule}{0.6pt}%
    \fbox{\includegraphics[width=\dimexpr0.85\linewidth-2\fboxrule\relax]{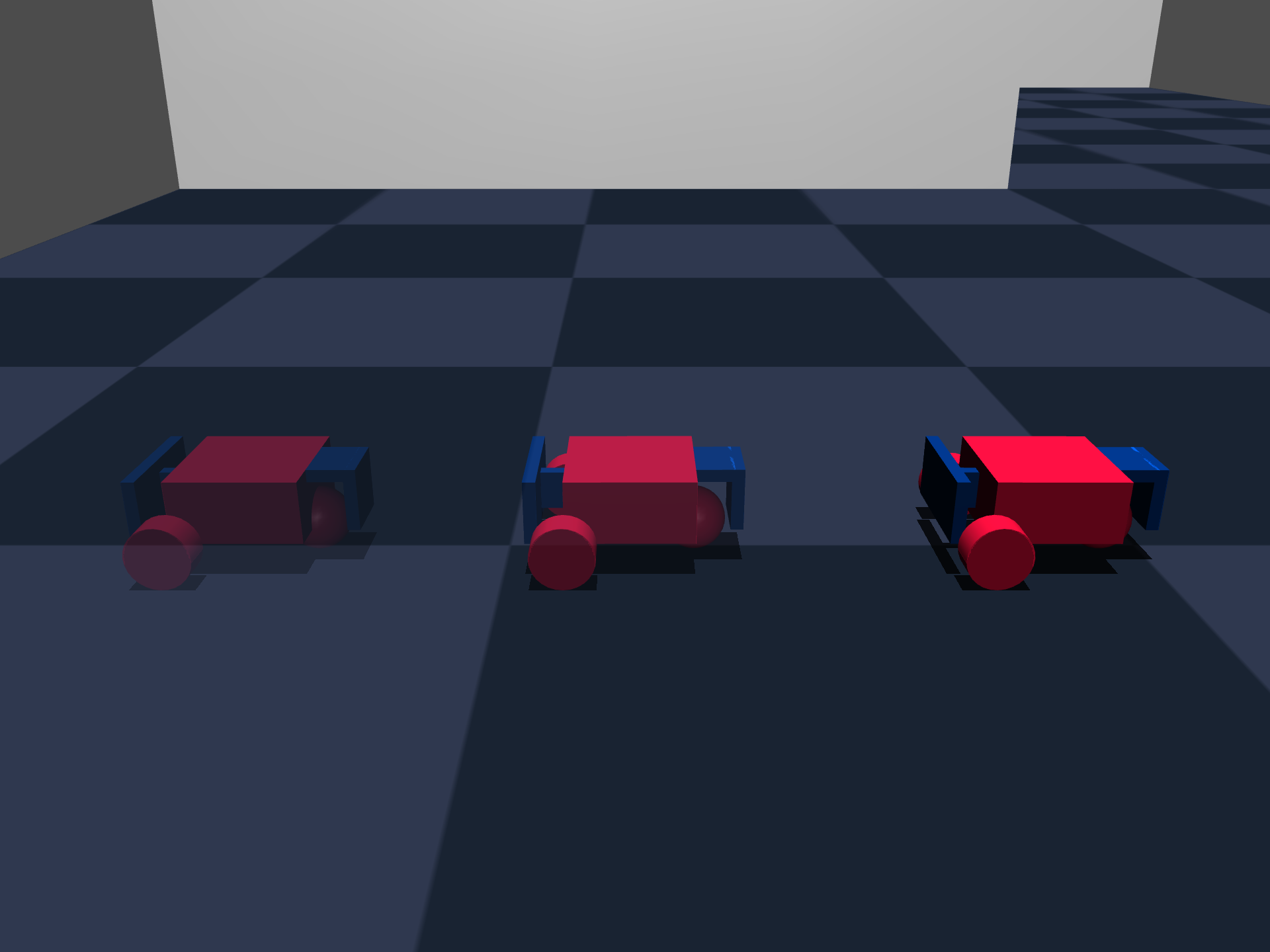}}
  \captionof{figure}{\texttt{Safety-Gymnasium} \texttt{Car} moving in \texttt{OGBench}'s maze.}
  \label{fig:carmaze}
\end{minipage}\hfill
\begin{minipage}[c]{0.6\linewidth}
  \centering
  \footnotesize
  \setlength{\tabcolsep}{4pt}
  \renewcommand{\arraystretch}{1.1}
   \scalebox{0.9}{\begin{tabular}{ccl ll @{\hspace{10pt}} ll}
  \toprule
  \multirow{2}{*}{\textbf{Env}} & \multirow{2}{*}{\textbf{Data}} & \multirow{2}{*}{\textbf{Size}}
  & \multicolumn{2}{c}{\textbf{Other GCRL}} & \multicolumn{2}{c}{\textbf{Quasimetric}} \\
  \cmidrule(lr){4-5}\cmidrule(lr){6-7}
  & & & \multicolumn{1}{c}{OTA} & \multicolumn{1}{c}{GAS}
  & \multicolumn{1}{c}{EikHiQRL} & \multicolumn{1}{c}{\textbf{\ours}} \\
  \midrule
  \multirow[c]{6}{*}{\rotatebox[origin=c]{90}{\texttt{carmaze}}}
  & \multirow[c]{3}{*}{\rotatebox[origin=c]{90}{\texttt{nav}}}
  & \texttt{medium}  & $63 \pm 7$  & $81 \pm 5$  & $72 \pm 8$  & \best{$90 \pm 3$} \\
  & & \texttt{large} & $32 \pm 14$ & $67 \pm 7$  & $45 \pm 15$ & \best{$81 \pm 12$} \\
  & & \texttt{giant} & $3 \pm 3$   & $40 \pm 8$  & $13 \pm 9$  & \best{$60 \pm 14$} \\
  \cmidrule(lr){2-7}
  & \multirow[c]{3}{*}{\rotatebox[origin=c]{90}{\texttt{sti}}}
  & \texttt{medium}  & $60 \pm 6$  & $74 \pm 6$  & $69 \pm 9$  & \best{$82 \pm 6$} \\
  & & \texttt{large} & $14 \pm 6$  & $64 \pm 10$ & $26 \pm 9$  & \best{$78 \pm 12$} \\
  & & \texttt{giant} & $0 \pm 0$   & $31 \pm 8$  & $5 \pm 7$   & \best{$45 \pm 17$} \\
  \bottomrule
  \end{tabular}}
  \vspace{-3pt}
  \caption{\textbf{Success rates (\%) ($\uparrow$)} on \texttt{carmaze} (mean $\pm$ std over 10 seeds). Full results appear in Appendix ~\ref{app:add}.}
  \label{tab:carmaze-small}
\end{minipage}
\vspace{-15pt}
\end{table}

\subsection{Nonholonomic Control Dynamics}

We additionally test our finite-horizon value supervision under nominally \emph{nonholonomic} dynamics, where orientation constrains translation (i.e., turning rather than stepping sideways). This provides a targeted setting for evaluating our approach when reachability depends on composing turning and translation. To this end, we construct the \texttt{carmaze} environment (Figure~\ref{fig:carmaze}) by integrating the \texttt{Safety-Gymnasium Car} robot \citep{ji2023safety}, natively simulated in \texttt{MuJoCo} \citep{todorov2012mujoco}, into the same \texttt{OGBench} layouts. Following its collection pipeline, an SAC locomotion expert with waypoint guidance generates \texttt{navigate} and \texttt{stitch} datasets. All methods share the datasets and evaluation protocol; car-specific adaptations and tuning details are provided in Appendix~\ref{app:add}.

\textbf{Success rate in \texttt{carmaze}.} In Table~\ref{tab:carmaze-small}, \ours{} outperforms baselines in all settings. On giant \texttt{navigate} and \texttt{stitch}, it reaches $60\%$ and $45\%$, versus $40\%$ and $31\%$ for GAS. These gains are consistent with our motivation, where finite-horizon supervision explicitly incorporates composed motions into local critic constraints when progress requires composing turning and translation. The \texttt{stitch} gains support its usefulness for reaching distant goals from short trajectory segments.
\section{Related Work}\label{sec:related}

\textbf{Goal-conditioned RL.} GCRL provides a framework for learning reusable policies across goals \citep{schaul2015universal, pong2018temporal} and turning achieved outcomes into self-supervision through hindsight relabeling \citep{andrychowicz2017hindsight, ghosh2019learning}. Offline GCRL extends this to reward-free datasets without further interaction \citep{chebotar2021actionable, park2025ogbench}. Approaches include weighted imitation \citep{yang2022rethinking}, policy regression via occupancy matching \citep{ma2022far}, and contrastive value learning \citep{eysenbach2022contrastive}. To tackle longer horizons, others use subgoal reaching \citep{park2023hiql}, advantage estimates via temporal abstraction \citep{ahn2026option}, flat-policy bootstrapping  \citep{zhou2026flattening}, and graph-based stitching \citep{baek2025graph}. A complementary line exploits the metric and geometric structure of goal-reaching values.

\textbf{(Quasi)metrics in GCRL.} Viewing costs as distances motivates reward shaping via dynamical distances \citep{hartikainen2019dynamical} and subgoal search \citep{eysenbach2019search}, planning embeddings for navigation \citep{yang2020plan2vec}, and metric representations for skill discovery \citep{park2024foundation}. A key insight is that \emph{quasimetrics} encode both directional costs and path composition. QRL turns this into a learning principle linking local transition constraints to global distances \citep{wang2023optimal}. CMD and TMD combine quasimetric structure with contrastive temporal-distance learning \citep{myers2024learning, myers2026offline}. EikHiQRL regularizes critic gradients with Eikonal constraints under simplified dynamics \citep{giammarino2026goal}. Grounding local supervision in composed reachability and state feasibility remains challenging, motivating geometric and optimal control.

\textbf{Geometric and optimal control.} Geometric control shows composing controls can generate motion beyond the instantaneous span \citep{jean2014control}. In optimal control, Hamilton-Jacobi theory connects dynamics to value gradients \citep{bardi1997optimal}, while state-constraint boundary conditions encode feasibility \citep{soner1986optimal}. In RL, these ideas motivate HJB-based derivative regularization \citep{lien2024enhancing} and Eikonal-based TD learning \citep{giammarino2026physics}. \ours{} unifies these in local supervision: a finite-horizon Hamiltonian preserves control composition while linearizing only the critic and boundary constraints encode feasibility.

\section{Conclusion}\label{sec:conc}
We formulated finite-horizon Hamiltonian constraints for learning global goal-reaching geometry through local quasimetric critic supervision. Our approach retains the effects of controls composed over time through the reachable set. We further separate dynamical reachability from boundary geometry, providing complementary signals about control dynamics and feasibility. With this, we developed \ours{} for GCRL, with data-driven zonotopic reachability estimates and boundary regularization. Experiments on \texttt{OGBench} show improvements over quasimetric baselines and competitive performance against other GCRL methods. We discuss limitations in Appendix~\ref{app:limit}.
\section*{Acknowledgments}
Authors were all partly supported by NIH R01AG092220. 

\section*{AI Use Disclosure}
In this work, we used generative AI tools to aid in refining writing, to aid in the generation of figures, to aid in literature search, to aid in editing the presentation of the proofs we derived, and to aid in coding with LLM agents. We have not used generative AI tools for proving mathematical claims or writing beyond the paragraph scale. We have reviewed all AI-assisted work and have verified the correctness of writing and code. We take responsibility for the final content of this work, including text, claims or artifacts produced with the aid of generative AI.

\bibliography{reference}

\begin{thebibliography}{49}
\providecommand{\natexlab}[1]{#1}
\providecommand{\url}[1]{\texttt{#1}}
\expandafter\ifx\csname urlstyle\endcsname\relax
  \providecommand{\doi}[1]{doi: #1}\else
  \providecommand{\doi}{doi: \begingroup \urlstyle{rm}\Url}\fi

\bibitem[Agrachev et~al.(2020)Agrachev, Barilari, and Boscain]{agrachev2020comprehensive}
Andrei Agrachev, Davide Barilari, and Ugo Boscain.
\newblock \emph{A comprehensive introduction to sub-Riemannian geometry}.
\newblock Cambridge University Press, 2020.

\bibitem[Ahn et~al.(2026)Ahn, Choi, Han, and Moon]{ahn2026option}
Hongjoon Ahn, Heewoong Choi, Jisu Han, and Taesup Moon.
\newblock Option-aware temporally abstracted value for offline goal-conditioned reinforcement learning.
\newblock \emph{Advances in Neural Information Processing Systems}, 38:\penalty0 99833--99861, 2026.

\bibitem[Althoff \& Frehse(2016)Althoff and Frehse]{althoff2016combining}
Matthias Althoff and Goran Frehse.
\newblock Combining zonotopes and support functions for efficient reachability analysis of linear systems.
\newblock In \emph{2016 IEEE 55th Conference on Decision and Control (CDC)}, pp.\  7439--7446. IEEE, 2016.

\bibitem[Althoff et~al.(2021)Althoff, Frehse, and Girard]{althoff2021set}
Matthias Althoff, Goran Frehse, and Antoine Girard.
\newblock Set propagation techniques for reachability analysis.
\newblock \emph{Annual Review of Control, Robotics, and Autonomous Systems}, 4\penalty0 (1):\penalty0 369--395, 2021.

\bibitem[Andrychowicz et~al.(2017)Andrychowicz, Wolski, Ray, Schneider, Fong, Welinder, McGrew, Tobin, Pieter~Abbeel, and Zaremba]{andrychowicz2017hindsight}
Marcin Andrychowicz, Filip Wolski, Alex Ray, Jonas Schneider, Rachel Fong, Peter Welinder, Bob McGrew, Josh Tobin, OpenAI Pieter~Abbeel, and Wojciech Zaremba.
\newblock Hindsight experience replay.
\newblock \emph{Advances in neural information processing systems}, 30, 2017.

\bibitem[Aubin et~al.(2011)Aubin, Bayen, and Saint-Pierre]{aubin2011viability}
Jean-Pierre Aubin, Alexandre~M. Bayen, and Patrick Saint-Pierre.
\newblock \emph{Viability Theory: New Directions}.
\newblock Springer, 2 edition, 2011.
\newblock \doi{10.1007/978-3-642-16684-6}.

\bibitem[Baek et~al.(2025)Baek, Park, Park, Oh, and Kim]{baek2025graph}
Seungho Baek, Taegeon Park, Jongchan Park, Seungjun Oh, and Yusung Kim.
\newblock Graph-assisted stitching for offline hierarchical reinforcement learning.
\newblock \emph{arXiv preprint arXiv:2506.07744}, 2025.

\bibitem[Bardi et~al.(1997)Bardi, Dolcetta, et~al.]{bardi1997optimal}
Martino Bardi, Italo~Capuzzo Dolcetta, et~al.
\newblock \emph{Optimal control and viscosity solutions of Hamilton-Jacobi-Bellman equations}, volume~12.
\newblock Springer, 1997.

\bibitem[Blanchini et~al.(2008)Blanchini, Miani, et~al.]{blanchini2008set}
Franco Blanchini, Stefano Miani, et~al.
\newblock \emph{Set-theoretic methods in control}, volume~78.
\newblock Springer, 2008.

\bibitem[Chebotar et~al.(2021)Chebotar, Hausman, Lu, Xiao, Kalashnikov, Varley, Irpan, Eysenbach, Julian, Finn, et~al.]{chebotar2021actionable}
Yevgen Chebotar, Karol Hausman, Yao Lu, Ted Xiao, Dmitry Kalashnikov, Jake Varley, Alex Irpan, Benjamin Eysenbach, Ryan Julian, Chelsea Finn, et~al.
\newblock Actionable models: Unsupervised offline reinforcement learning of robotic skills.
\newblock \emph{arXiv preprint arXiv:2104.07749}, 2021.

\bibitem[Chen(1984)]{chen1984linear}
Chi-Tsong Chen.
\newblock \emph{Linear system theory and design}, volume 301.
\newblock Holt, Rinehart and Winston New York, 1984.

\bibitem[Delfour \& Zol{\'e}sio(2011)Delfour and Zol{\'e}sio]{delfour2011shapes}
Michel~C Delfour and J-P Zol{\'e}sio.
\newblock \emph{Shapes and geometries: metrics, analysis, differential calculus, and optimization}.
\newblock SIAM, 2011.

\bibitem[Eysenbach et~al.(2019)Eysenbach, Salakhutdinov, and Levine]{eysenbach2019search}
Ben Eysenbach, Russ~R Salakhutdinov, and Sergey Levine.
\newblock Search on the replay buffer: Bridging planning and reinforcement learning.
\newblock \emph{Advances in neural information processing systems}, 32, 2019.

\bibitem[Eysenbach et~al.(2022)Eysenbach, Zhang, Levine, and Salakhutdinov]{eysenbach2022contrastive}
Benjamin Eysenbach, Tianjun Zhang, Sergey Levine, and Russ~R Salakhutdinov.
\newblock Contrastive learning as goal-conditioned reinforcement learning.
\newblock \emph{Advances in Neural Information Processing Systems}, 35:\penalty0 35603--35620, 2022.

\bibitem[Ghosh et~al.(2019)Ghosh, Gupta, Reddy, Fu, Devin, Eysenbach, and Levine]{ghosh2019learning}
Dibya Ghosh, Abhishek Gupta, Ashwin Reddy, Justin Fu, Coline Devin, Benjamin Eysenbach, and Sergey Levine.
\newblock Learning to reach goals via iterated supervised learning.
\newblock \emph{arXiv preprint arXiv:1912.06088}, 2019.

\bibitem[Giammarino \& Qureshi(2026)Giammarino and Qureshi]{giammarino2026goal}
Vittorio Giammarino and Ahmed~Hussain Qureshi.
\newblock Goal reaching with eikonal-constrained hierarchical quasimetric reinforcement learning.
\newblock In \emph{International Conference on Learning Representations}, volume 2026, pp.\  59145--59182, 2026.

\bibitem[Giammarino et~al.(2026)Giammarino, Ni, and Qureshi]{giammarino2026physics}
Vittorio Giammarino, Ruiqi Ni, and Ahmed Qureshi.
\newblock Physics-informed value learner for offline goal-conditioned reinforcement learning.
\newblock \emph{Advances in Neural Information Processing Systems}, 38:\penalty0 137769--137796, 2026.

\bibitem[Girard(2005)]{girard2005reachability}
Antoine Girard.
\newblock Reachability of uncertain linear systems using zonotopes.
\newblock In \emph{HSCC}, volume 3414, pp.\  291--305, 2005.

\bibitem[Haralick et~al.(1987)Haralick, Sternberg, and Zhuang]{haralick1987image}
Robert~M Haralick, Stanley~R Sternberg, and Xinhua Zhuang.
\newblock Image analysis using mathematical morphology.
\newblock \emph{IEEE transactions on pattern analysis and machine intelligence}, \penalty0 (4):\penalty0 532--550, 1987.

\bibitem[Hartikainen et~al.(2019)Hartikainen, Geng, Haarnoja, and Levine]{hartikainen2019dynamical}
Kristian Hartikainen, Xinyang Geng, Tuomas Haarnoja, and Sergey Levine.
\newblock Dynamical distance learning for semi-supervised and unsupervised skill discovery.
\newblock \emph{arXiv preprint arXiv:1907.08225}, 2019.

\bibitem[Jean(2014)]{jean2014control}
Fr{\'e}d{\'e}ric Jean.
\newblock \emph{Control of nonholonomic systems: from sub-Riemannian geometry to motion planning}.
\newblock Springer, 2014.

\bibitem[Ji et~al.(2023)Ji, Zhang, Zhou, Pan, Huang, Sun, Geng, Zhong, Dai, and Yang]{ji2023safety}
Jiaming Ji, Borong Zhang, Jiayi Zhou, Xuehai Pan, Weidong Huang, Ruiyang Sun, Yiran Geng, Yifan Zhong, Josef Dai, and Yaodong Yang.
\newblock Safety gymnasium: A unified safe reinforcement learning benchmark.
\newblock \emph{Advances in Neural Information Processing Systems}, 36:\penalty0 18964--18993, 2023.

\bibitem[Kostrikov et~al.(2021)Kostrikov, Nair, and Levine]{kostrikov2021offline}
Ilya Kostrikov, Ashvin Nair, and Sergey Levine.
\newblock Offline reinforcement learning with implicit q-learning.
\newblock \emph{arXiv preprint arXiv:2110.06169}, 2021.

\bibitem[Liberzon(2011)]{liberzon2011calculus}
Daniel Liberzon.
\newblock Calculus of variations and optimal control theory: a concise introduction.
\newblock 2011.

\bibitem[Lien et~al.(2024)Lien, Hsieh, Li, and Wang]{lien2024enhancing}
Yun-Hsuan Lien, Ping-Chun Hsieh, Tzu-Mao Li, and Yu-Shuen Wang.
\newblock Enhancing value function estimation through first-order state-action dynamics in offline reinforcement learning.
\newblock In \emph{Forty-first International Conference on Machine Learning}, 2024.

\bibitem[Lillicrap et~al.(2016)Lillicrap, Hunt, Pritzel, Heess, Erez, Tassa, Silver, and Wierstra]{lillicrap2015continuous}
Timothy~P Lillicrap, Jonathan~J Hunt, Alexander Pritzel, Nicolas Heess, Tom Erez, Yuval Tassa, David Silver, and Daan Wierstra.
\newblock Continuous control with deep reinforcement learning.
\newblock In \emph{International Conference on Learning Representations}, 2016.
\newblock URL \url{https://arxiv.org/abs/1509.02971}.

\bibitem[Ljung(1999)]{ljung1999system}
L.~Ljung.
\newblock \emph{System Identification: Theory for the User}.
\newblock Prentice Hall information and system sciences series. Prentice Hall PTR, 1999.
\newblock ISBN 9780136566953.
\newblock URL \url{https://books.google.com/books?id=nHFoQgAACAAJ}.

\bibitem[Ma et~al.(2022)Ma, Yan, Jayaraman, and Bastani]{ma2022far}
Yecheng~Jason Ma, Jason Yan, Dinesh Jayaraman, and Osbert Bastani.
\newblock How far i'll go: Offline goal-conditioned reinforcement learning via $ f $-advantage regression.
\newblock \emph{arXiv preprint arXiv:2206.03023}, 2022.

\bibitem[Myers et~al.(2024)Myers, Zheng, Dragan, Levine, and Eysenbach]{myers2024learning}
Vivek Myers, Chongyi Zheng, Anca Dragan, Sergey Levine, and Benjamin Eysenbach.
\newblock Learning temporal distances: Contrastive successor features can provide a metric structure for decision-making.
\newblock \emph{arXiv preprint arXiv:2406.17098}, 2024.

\bibitem[Myers et~al.(2026)Myers, Zheng, Eysenbach, and Levine]{myers2026offline}
Vivek Myers, Bill Zheng, Benjamin Eysenbach, and Sergey Levine.
\newblock Offline goal-conditioned reinforcement learning with quasimetric representations.
\newblock \emph{Advances in Neural Information Processing Systems}, 38:\penalty0 19654--19679, 2026.

\bibitem[Palis(1974)]{palis1974vector}
Jacob Palis.
\newblock Vector fields generate few diffeomorphisms.
\newblock \emph{Bulletin of the American Mathematical Society}, 80\penalty0 (3):\penalty0 503--505, 1974.

\bibitem[Park et~al.(2023)Park, Ghosh, Eysenbach, and Levine]{park2023hiql}
Seohong Park, Dibya Ghosh, Benjamin Eysenbach, and Sergey Levine.
\newblock Hiql: Offline goal-conditioned rl with latent states as actions.
\newblock \emph{Advances in Neural Information Processing Systems}, 36:\penalty0 34866--34891, 2023.

\bibitem[Park et~al.(2024)Park, Kreiman, and Levine]{park2024foundation}
Seohong Park, Tobias Kreiman, and Sergey Levine.
\newblock Foundation policies with hilbert representations.
\newblock \emph{arXiv preprint arXiv:2402.15567}, 2024.

\bibitem[Park et~al.(2025)Park, Frans, Eysenbach, and Levine]{park2025ogbench}
Seohong Park, Kevin Frans, Benjamin Eysenbach, and Sergey Levine.
\newblock Ogbench: Benchmarking offline goal-conditioned rl.
\newblock In \emph{International Conference on Learning Representations}, volume 2025, pp.\  94937--94982, 2025.

\bibitem[Peng et~al.(2019)Peng, Kumar, Zhang, and Levine]{peng2019advantage}
Xue~Bin Peng, Aviral Kumar, Grace Zhang, and Sergey Levine.
\newblock Advantage-weighted regression: Simple and scalable off-policy reinforcement learning.
\newblock \emph{arXiv preprint arXiv:1910.00177}, 2019.

\bibitem[Pong et~al.(2018)Pong, Gu, Dalal, and Levine]{pong2018temporal}
Vitchyr Pong, Shixiang Gu, Murtaza Dalal, and Sergey Levine.
\newblock Temporal difference models: Model-free deep rl for model-based control.
\newblock \emph{arXiv preprint arXiv:1802.09081}, 2018.

\bibitem[Rawlings et~al.(2020)Rawlings, Mayne, and Diehl]{rawlings2020model}
James~B Rawlings, David~Q Mayne, and Moritz~M Diehl.
\newblock Model predictive control: theory, computation, and design.
\newblock \emph{(No Title)}, 2020.

\bibitem[Schaul et~al.(2015)Schaul, Horgan, Gregor, and Silver]{schaul2015universal}
Tom Schaul, Daniel Horgan, Karol Gregor, and David Silver.
\newblock Universal value function approximators.
\newblock In \emph{International conference on machine learning}, pp.\  1312--1320. PMLR, 2015.

\bibitem[Soner(1986)]{soner1986optimal}
Halil~Mete Soner.
\newblock Optimal control with state-space constraint i.
\newblock \emph{SIAM Journal on Control and Optimization}, 24\penalty0 (3):\penalty0 552--561, 1986.

\bibitem[Sutton et~al.(1999)Sutton, Precup, and Singh]{sutton1999between}
Richard~S Sutton, Doina Precup, and Satinder Singh.
\newblock Between mdps and semi-mdps: A framework for temporal abstraction in reinforcement learning.
\newblock \emph{Artificial intelligence}, 112\penalty0 (1-2):\penalty0 181--211, 1999.

\bibitem[Tebjou et~al.(2023)Tebjou, Frehse, et~al.]{tebjou2023data}
Abdelmouaiz Tebjou, Goran Frehse, et~al.
\newblock Data-driven reachability using christoffel functions and conformal prediction.
\newblock In \emph{Conformal and Probabilistic Prediction with Applications}, pp.\  194--213. PMLR, 2023.

\bibitem[Todorov et~al.(2012)Todorov, Erez, and Tassa]{todorov2012mujoco}
Emanuel Todorov, Tom Erez, and Yuval Tassa.
\newblock Mujoco: A physics engine for model-based control.
\newblock In \emph{2012 IEEE/RSJ international conference on intelligent robots and systems}, pp.\  5026--5033. IEEE, 2012.

\bibitem[Verhaegen \& Verdult(2007)Verhaegen and Verdult]{verhaegen2007filtering}
Michel Verhaegen and Vincent Verdult.
\newblock \emph{Filtering and system identification: a least squares approach}.
\newblock Cambridge university press, 2007.

\bibitem[Wang \& Isola(2022)Wang and Isola]{wang2022improved}
Tongzhou Wang and Phillip Isola.
\newblock Improved representation of asymmetrical distances with interval quasimetric embeddings.
\newblock In \emph{NeurIPS 2022 Workshop on Symmetry and Geometry in Neural Representations}, 2022.
\newblock URL \url{https://openreview.net/forum?id=KRiST_rzkGl}.

\bibitem[Wang et~al.(2023)Wang, Torralba, Isola, and Zhang]{wang2023optimal}
Tongzhou Wang, Antonio Torralba, Phillip Isola, and Amy Zhang.
\newblock Optimal goal-reaching reinforcement learning via quasimetric learning.
\newblock In \emph{International Conference on Machine Learning}, pp.\  36411--36430. PMLR, 2023.

\bibitem[Yang et~al.(2020)Yang, Zhang, Morcos, Pineau, Abbeel, and Calandra]{yang2020plan2vec}
Ge~Yang, Amy Zhang, Ari Morcos, Joelle Pineau, Pieter Abbeel, and Roberto Calandra.
\newblock Plan2vec: Unsupervised representation learning by latent plans.
\newblock In \emph{Learning for Dynamics and Control}, pp.\  935--946. PMLR, 2020.

\bibitem[Yang et~al.(2022)Yang, Lu, Li, Sun, Fang, Du, Li, Han, and Zhang]{yang2022rethinking}
Rui Yang, Yiming Lu, Wenzhe Li, Hao Sun, Meng Fang, Yali Du, Xiu Li, Lei Han, and Chongjie Zhang.
\newblock Rethinking goal-conditioned supervised learning and its connection to offline rl.
\newblock \emph{arXiv preprint arXiv:2202.04478}, 2022.

\bibitem[Yue et~al.(2016)Yue, Thunberg, and Gon{\c{c}}alves]{yue2016inverse}
Zuogon Yue, Johan Thunberg, and Jorge Gon{\c{c}}alves.
\newblock Inverse problems for matrix exponential in system identification: system aliasing.
\newblock \emph{arXiv preprint arXiv:1605.06973}, 2016.

\bibitem[Zhou \& Kao(2026)Zhou and Kao]{zhou2026flattening}
John Zhou and Jonathan Kao.
\newblock Flattening hierarchies with policy bootstrapping.
\newblock \emph{Advances in Neural Information Processing Systems}, 38:\penalty0 82975--83007, 2026.

\end{thebibliography}
\bibliographystyle{arxiv_ref}

\newpage
\appendix
\section*{Appendix}

\section{Limitations and Future Work}\label{app:limit}
We see several natural directions for extending our method. Our instantiation in Section~\ref{sec:offline-inst} uses an affine dynamics model and occupancy-based boundary estimates to realize tractable evaluation of the constraints in (\ref{eq:normalized-objective}). While these approximations are effective in our \texttt{OGBench} experiments, the affine model may not capture all state-dependent and nonlinear effects of the underlying control dynamics. The estimates also depend on coverage of relevant states in the training data $\mathcal D$, a concern shared by offline baselines such as QRL and EikHiQRL \citep{wang2023optimal,giammarino2026goal}. Richer state representations and nonlinear reachability estimators could broaden applicability; an important challenge is retaining efficient constraint evaluation. Our formulation in Section~\ref{sec:method} also assumes deterministic dynamics, and the experiments in Section~\ref{sec:exp} evaluate offline tasks. Extending the formulation to stochastic control systems and evaluating the method with online data collection are promising directions for future work.
\section{Theoretical Analysis}\label{app:proof}

We first show that expressing Bellman optimality through controls composed over duration $\tau$ preserves the target goal-reaching value $d^\ast$ and its quasimetric structure. This provides the foundation for our finite-horizon critic constraints.

\begin{lemma}[Goal-reaching geometry under compositional control]
\label{thm:k-step-quasimetric}
Consider the minimum-cost $d^\ast$, $\mathcal U_\tau(s)$, and $\mathcal T_\tau$ from \eqref{eq:tau-bellman}, and $\mathcal R_\tau(s)$ from \eqref{eq:true-reachability}. Assume $d^\ast$ is finite and feasible controls can be joined and split at supported times. Then $d^\ast\in\mathcal Q(\mathcal S)$ and satisfies \eqref{eq:tau-bellman-reachability} whenever $d^\ast(s,g)>\tau$. Moreover, $d^\ast$ is the pointwise greatest quasimetric satisfying $d(s,\mathcal T_\tau(s,u))\leq \tau$ for every supported $\tau>0$, $s\in\mathcal S$, and $u\in\mathcal U_\tau(s)$.
\end{lemma}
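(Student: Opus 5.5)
The proof works directly from the definition of $d^\ast$ as a minimum elapsed time:
\[
d^\ast(s,g)=\inf\{t\ge 0:\exists u\in\mathcal U_t(s),\ \mathcal T_t(s,u)=g\},
\]
with $d^\ast(g,g)=0$ via the trivial control of zero duration. The three claims rest on the join/split assumption and are proved in order.

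\textbf{Quasimetric.} Nonnegativity is immediate, and $d(s,s)=0$ holds by definition. For the triangle inequality, take $\epsilon$-optimal feasible controls $u_1$ from $s_1$ to $s_2$ and $u_2$ from $s_2$ to $s_3$. By the join assumption, their concatenation is a feasible control from $s_1$ to $s_3$ of duration $t_1+t_2$. Hence $d^\ast(s_1,s_3)\le d^\ast(s_1,s_2)+d^\ast(s_2,s_3)+2\epsilon$, and we let $\epsilon\to0$.

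\textbf{Bellman relation.} Fix $s,g$ with $d^\ast(s,g)>\tau$.
\begin{itemize}
\item[($\le$)] For any $u\in\mathcal U_\tau(s)$, the triangle inequality gives $d^\ast(s,g)\le d^\ast(s,\mathcal T_\tau(s,u))+d^\ast(\mathcal T_\tau(s,u),g)$. The first term is at most $\tau$, since $u$ itself is a witness. Taking the infimum over $u$ gives $d^\ast(s,g)\le\tau+\inf_u d^\ast(\mathcal T_\tau(s,u),g)$.
\item[($\ge$)] Take an $\epsilon$-optimal control from $s$ to $g$ of duration $t<d^\ast(s,g)+\epsilon$. Since $d^\ast(s,g)>\tau$, we have $t>\tau$. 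Split it at time $\tau$ using the split assumption. The head lies in $\mathcal U_\tau(s)$, and the tail reaches $g$ from $\mathcal T_\tau(s,u)$ in time $t-\tau$. Therefore $\inf_u d^\ast(\mathcal T_\tau(s,u),g)\le d^\ast(s,g)+\epsilon-\tau$.
\end{itemize}
Since the control enters only through its endpoint, and $\mathcal R_\tau(s)$ is exactly the image of $u\mapsto\mathcal T_\tau(s,u)$ translated by $-s$, this is \eqref{eq:tau-bellman-reachability}.

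\textbf{Maximality.} First, $d^\ast$ satisfies the local constraints: $d^\ast(s,\mathcal T_\tau(s,u))\le\tau$ because $u$ is a witness of duration $\tau$. Now let $d\in\mathcal Q(\mathcal S)$ satisfy all the local constraints, and fix $s,g$ with $d^\ast(s,g)<\infty$. Take a feasible control of duration $t\le d^\ast(s,g)+\epsilon$ reaching $g$, and choose the single supported duration $\tau=t$. The constraint gives $d(s,g)\le t$ directly, so $d\le d^\ast$ pointwise.

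If only certain durations are supported, so that $t$ itself may not be admissible, split the control instead into supported pieces of durations $\tau_1,\dots,\tau_N$ with $\sum_i\tau_i=t$. Applying the local constraint to each piece and chaining with the triangle inequality gives $d(s,g)\le\sum_i\tau_i=t$.

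\textbf{Main obstacle.} The delicate step is this decomposition. It must show that "supported times" allow any near-optimal duration to be partitioned into supported pieces, or that $t$ itself is supported. I would read the join/split hypothesis to guarantee exactly this, for example by assuming the supported times form an additive semigroup such as $\tau\in\Delta t\,\mathbb N$, where every feasible duration is such a sum. Infinite values of $d^\ast$ are excluded by the finiteness assumption, so no extended-value convention is needed.
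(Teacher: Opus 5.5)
Your proposal is correct and follows essentially the same route as the paper: you join near-optimal controls for the triangle inequality, join and split at $\tau$ for the two Bellman inequalities, and apply the local constraint at the duration of a near-optimal goal-reaching control to obtain $d\le d^\ast$. Your fallback of chaining the constraint over supported pieces via the triangle inequality is a harmless refinement of a point the paper takes for granted, since the paper uses the goal-reaching duration itself as the supported $\tau$.
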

\begin{proof}
Nonnegativity and $d^\ast(s,s)=0$ follow from the definition. Joining arbitrarily near-optimal controls through an intermediate state gives
\[
    d^\ast(s_1,s_3)\leq d^\ast(s_1,s_2)+d^\ast(s_2,s_3).
\]
Since $d^\ast$ is finite, $d^\ast\in\mathcal Q(\mathcal S)$. Every $u\in\mathcal U_\tau(s)$ reaches $\mathcal T_\tau(s,u)$ in duration $\tau$, so $d^\ast(s,\mathcal T_\tau(s,u))\leq\tau$. Conversely, any quasimetric $d$ satisfying these bounds obeys $d(s,g)\leq\tau$ whenever $\mathcal T_\tau(s,u)=g$. Taking the infimum over feasible goal-reaching controls gives $d(s,g)\leq d^\ast(s,g)$. For Bellman optimality, joining a control in $\mathcal U_\tau(s)$ with a near-optimal continuation gives
\[
    d^\ast(s,g)\leq\tau+
    \inf_{u\in\mathcal U_\tau(s)}
    d^\ast(\mathcal T_\tau(s,u),g).
\]
Conversely, when $d^\ast(s,g)>\tau$, every near-optimal goal-reaching trajectory extends beyond $\tau$. Splitting it there, its remaining duration is at least the optimal cost from $\mathcal T_\tau(s,u)$. Taking arbitrarily near-optimal trajectories gives the reverse inequality. Substituting \eqref{eq:true-reachability} yields \eqref{eq:tau-bellman-reachability}.
\end{proof}

Lemma~\ref{thm:k-step-quasimetric} establishes that finite-horizon reachability describes the same optimal goal-reaching value. We next connect this formulation to the transition data in Section~\ref{sec:offline-inst} and show that the normalization used in \eqref{eq:normalized-objective} changes only the cost units.

\begin{lemma}[Data instantiation and cost normalization]
\label{cor:cost-normalization}
Under the instantiation $\tau=k\Delta t$ in Section~\ref{sec:offline-inst}, with goal reaching evaluated at each environment step, dividing time costs by $\tau$ preserves optimal policies and expresses costs in environment steps divided by $k$. Scaling $d_\theta\mapsto d_\theta/\tau$ converts \eqref{eq:our-objective} to \eqref{eq:normalized-objective}, preserving quasimetric structure and the boundary condition.
\end{lemma}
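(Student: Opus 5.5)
The argument has three parts: invariance of optimal policies under positive rescaling, identification of the rescaled cost with steps divided by $k$, and transformation of each piece of \eqref{eq:our-objective} under $d_\theta\mapsto d_\theta/\tau$.

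\emph{Optimal policies.} With $\tau=k\Delta t$ and goal reaching checked at every environment step, a policy that first hits $g$ after $T$ steps accumulates time cost $T\Delta t$. Dividing by $\tau$ gives $T\Delta t/\tau=T/k$, i.e., environment steps divided by $k$. Since $c\mapsto c/\tau$ is strictly increasing and the same for every policy, the ordering of policies by cost is unchanged, so the set of minimizers is unchanged. The normalized optimal cost is $d^\ast/\tau$. By Lemma~\ref{thm:k-step-quasimetric}, $d^\ast\in\mathcal Q(\mathcal S)$, so this also fixes the normalized target.

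\emph{Quasimetric structure.} For any $d\in\mathcal Q(\mathcal S)$ and $c=1/\tau>0$, $cd$ is nonnegative, vanishes on the diagonal, and satisfies the triangle inequality after multiplying both sides by $c$. Hence $d_\theta/\tau\in\mathcal Q(\mathcal S)$. Scaling by the same positive constant changes the objective $\mathbb E_{s,g}[d_\theta]$ only by that factor, so the maximizers correspond.

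\emph{The constraints.} Write $\tilde d=d_\theta/\tau$, so that $\nabla_s d_\theta=\tau\nabla_s\tilde d$. The interior constraint \eqref{eq:interior-const},
\[
\sup_{\delta\in\mathcal R_\tau^{\mathrm{dyn}}(s)}\langle-\nabla_s d_\theta,\delta\rangle=\tau,
\]
becomes, by positive homogeneity of the support function,
\[
\tau\sup_{\delta\in\mathcal R_k^{\mathrm{dyn}}(s)}\langle-\nabla_s\tilde d,\delta\rangle=\tau,
\]
which is the normalized unit-descent condition. The supremum is a maximum because $\mathcal R_k^{\mathrm{dyn}}(s)$ is assumed nonempty and compact. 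The boundary inequality $\nu^\top\nabla_s d_\theta\ge0$ is invariant under multiplication by $\tau>0$. Both constraints hold on the same sets $\mathcal S^\circ\setminus\{g\}$ and $\partial\mathcal S\setminus\{g\}$, since $\tilde d(s,g)=0$ exactly when $d_\theta(s,g)=0$.

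The delicate step is the change of units. One must state that $\tilde d$ is measured in units of $\tau$ while the reachable set $\mathcal R_k^{\mathrm{dyn}}$ is left untouched, so that only the right-hand side of the interior constraint changes from $\tau$ to $1$. Because $\Delta t$ cancels in $T/k$, the normalized problem never requires the numerical value of $\Delta t$.
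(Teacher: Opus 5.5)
Your proposal is correct and follows essentially the same route as the paper: you compute the normalized cost $N\Delta t/\tau=N/k$, observe that a fixed positive rescaling preserves the ordering of policy costs and hence the optimal policies, and pull $\tau$ through the support function so that $H_\tau^{\mathrm{dyn}}(s,\nabla_s d_\theta)=\max_{\delta\in\mathcal R_k^{\mathrm{dyn}}(s)}\langle -\nabla_s(d_\theta/\tau),\delta\rangle-1$, with the quasimetric axioms, the boundary inequality, and the objective ordering all invariant under positive scaling. The paper is a little more explicit on two points: $\mathcal R_{k\Delta t}^{\mathrm{dyn}}=\mathcal R_k^{\mathrm{dyn}}$ holds because a length-$k$ action sequence is the $k$-fold composition of $\mathcal T$, and the rescaling is linear and therefore commutes with the expectation defining $V^\pi$ for stochastic policies (strict monotonicity alone would not give this, but linearity does).
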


\begin{proof}
Under the stated assumption that each action takes duration $\Delta t$, an action sequence of length $k$ takes duration $\tau$ and reaches the endpoint obtained by composing $\mathcal T$ $k$ times. The reachable-set identities follow. For any policy, a trajectory reaching the goal after $N$ environment steps incurs time cost $N\Delta t$. Its normalized cost is
\[
    \frac{N\Delta t}{\tau}=\frac{N}{k}.
\]
This positive scaling preserves policy ordering, including after taking expectations. It therefore preserves the optimal policies. Since $\tau$ is fixed,
\[
    H_\tau^{\mathrm{dyn}}(s,\nabla_s d_\theta(s,g))
    =
    \max_{\delta\in\mathcal R_k^{\mathrm{dyn}}(s)}
    \left\langle-\nabla_s\!\left(\frac{d_\theta(s,g)}{\tau}\right),\delta\right\rangle-1.
\]
Positive scaling also preserves the quasimetric axioms, the boundary condition, and the ordering of the distance-maximization objective. Reusing $d_\theta$ for the normalized critic, as in Section~\ref{sec:offline-inst}, gives \eqref{eq:normalized-objective}. The numerical value of $\Delta t$ cancels.
\end{proof}

As discussed in Section~\ref{sec:sec4}, we now quantify the Hamiltonian gap introduced by replacing feasible reachability with dynamical reachability on the interior with a separate boundary condition.

\begin{lemma}[Dynamics-feasibility gap]
\label{lem:feasibility-gap}
Fix $s\in\mathcal S$ and $\tau>0$. Assume the sets in \eqref{eq:true-reachability} and \eqref{eq:rk-dyn} are nonempty and compact, with $\mathcal R_\tau(s)\subseteq\mathcal R_\tau^{\mathrm{dyn}}(s)$ as in our formulation. Define
\[
    e_\tau(s):=
    \max_{\delta\in\mathcal R_\tau^{\mathrm{dyn}}(s)}
    \min_{\bar\delta\in\mathcal R_\tau(s)}
    \|\delta-\bar\delta\|_2.
\]
Then, for every $p\in\mathbb R^n$,
\[
    0\leq H_\tau^{\mathrm{dyn}}(s,p)-H_\tau(s,p)
    \leq \frac{\|p\|_2e_\tau(s)}{\tau}.
\]
The two Hamiltonians agree if and only if some maximizer over $\mathcal R_\tau^{\mathrm{dyn}}(s)$ belongs to $\mathcal R_\tau(s)$.
\end{lemma}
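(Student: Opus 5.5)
The plan is to work with support functions. Write $\sigma_K(q):=\max_{\delta\in K}\langle q,\delta\rangle$ for a nonempty compact set $K$. By definition,
\[
    H_\tau^{\mathrm{dyn}}(s,p)-H_\tau(s,p)=\frac{1}{\tau}\bigl(\sigma_{\mathcal R_\tau^{\mathrm{dyn}}(s)}(-p)-\sigma_{\mathcal R_\tau(s)}(-p)\bigr).
\]
Compactness guarantees that both maxima are attained, and that the inner minimum and outer maximum in $e_\tau(s)$ are attained as well. The outer maximum is attained because the distance to a compact set is continuous. So the lemma reduces to three facts about support functions of nested compact sets.

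The lower bound is immediate. Since $\mathcal R_\tau(s)\subseteq\mathcal R_\tau^{\mathrm{dyn}}(s)$, a maximum over the larger set is at least the maximum over the smaller one.

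For the upper bound, I would take a maximizer $\delta^\ast\in\mathcal R_\tau^{\mathrm{dyn}}(s)$ of $\langle -p,\cdot\rangle$. I would then pick a nearest point $\bar\delta\in\mathcal R_\tau(s)$, so that $\|\delta^\ast-\bar\delta\|_2\le e_\tau(s)$. Cauchy--Schwarz then gives
\[
    \sigma_{\mathcal R_\tau^{\mathrm{dyn}}(s)}(-p)=\langle -p,\bar\delta\rangle+\langle -p,\delta^\ast-\bar\delta\rangle\le \sigma_{\mathcal R_\tau(s)}(-p)+\|p\|_2e_\tau(s).
\]
Dividing by $\tau$ yields the claim. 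This is essentially the standard fact that support functions are Lipschitz in the Hausdorff distance, here specialized to the one-sided excess $e_\tau(s)$.

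For the equality characterization, I read it as holding for the fixed $p$, with maximizers of $\langle -p,\cdot\rangle$. Suppose first that some maximizer $\delta^\ast$ over $\mathcal R_\tau^{\mathrm{dyn}}(s)$ lies in $\mathcal R_\tau(s)$. Then $\sigma_{\mathcal R_\tau(s)}(-p)\ge\langle -p,\delta^\ast\rangle=\sigma_{\mathcal R_\tau^{\mathrm{dyn}}(s)}(-p)$, and combined with the lower bound this gives equality. Conversely, suppose the two Hamiltonians agree. Take a maximizer $\bar\delta$ over the compact set $\mathcal R_\tau(s)$. It belongs to $\mathcal R_\tau^{\mathrm{dyn}}(s)$ and attains the value $\sigma_{\mathcal R_\tau^{\mathrm{dyn}}(s)}(-p)$, so it is a maximizer over the larger set that lies in $\mathcal R_\tau(s)$.

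The only delicate point is interpretive rather than technical: the "if and only if" must be stated per covector $p$, since agreement for all $p$ would instead amount to equality of closed convex hulls. Attainment of all the extrema follows from compactness, so there is no real analytic obstacle.
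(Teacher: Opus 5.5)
Your proposal is correct and follows the paper's proof essentially step for step: set inclusion for the lower bound, a maximizer $\delta^\ast$ over $\mathcal R_\tau^{\mathrm{dyn}}(s)$ with a nearest feasible point $\bar\delta$ and Cauchy--Schwarz for the upper bound, and the same two-direction argument (using a feasible maximizer that exists by compactness) for the equality characterization. Your reading of the ``if and only if'' as a statement for each fixed $p$ is also how the paper's proof uses it.
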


\begin{proof}
Set inclusion gives the lower bound. By compactness, choose a maximizing displacement $\delta^\ast\in\mathcal R_\tau^{\mathrm{dyn}}(s)$ and a nearest $\bar\delta\in\mathcal R_\tau(s)$. Then
\[
\begin{aligned}
    \tau\bigl(H_\tau^{\mathrm{dyn}}(s,p)-H_\tau(s,p)\bigr)
    &=
    \langle-p,\delta^\ast\rangle
    -\max_{\delta\in\mathcal R_\tau(s)}\langle-p,\delta\rangle\\
    &\leq \langle-p,\delta^\ast-\bar\delta\rangle\\
    &\leq \|p\|_2e_\tau(s).
\end{aligned}
\]
If an unconstrained maximizer belongs to the feasible set, the maxima agree. Conversely, if the maxima agree, any feasible maximizer, which exists by compactness, also maximizes over the unconstrained set.
\end{proof}

Here, $e_\tau(s)$ measures the largest distance from a dynamically reachable displacement to the feasible reachable set. The Hamiltonian gap vanishes whenever an unconstrained maximizer is feasible; otherwise, the bound quantifies its dependence on reachability mismatch, critic gradient, and horizon. 

\begin{theorem}[Value approximation under true reachability]
\label{thm:global-value}
Under the assumptions of Lemma~\ref{thm:k-step-quasimetric}, fix $g\in\mathcal S$ and a supported horizon $\tau>0$. Assume each $\mathcal R_\tau(s)$ is nonempty and compact. Let $d_\theta(\cdot,g)$ be continuous, finite, and nonnegative, with $d_\theta(g,g)=0$. Let $\mathcal B_g\subseteq\mathcal S$ contain $\{s:d^\ast(s,g)\leq\tau\}$, and assume $|d_\theta(s,g)-d^\ast(s,g)|\leq\beta$ on $\mathcal B_g$. Outside $\mathcal B_g$, assume the constraint in \eqref{eq:tau-hj} holds up to residual
\[
    |H_\tau(s,\nabla_s d_\theta(s,g))|\leq\epsilon.
\]
Assume also that $\|\delta\|_2\leq\rho$ for every $s\notin\mathcal B_g$ and $\delta\in\mathcal R_\tau(s)$, and that $d_\theta(\cdot,g)$ has an $L$-Lipschitz gradient on an open set containing every segment $[s,s+\delta]$ for these pairs. If $\beta,\epsilon,L,\rho\geq0$ and $\epsilon+L\rho^2/(2\tau)<1$, then, for every $s\in\mathcal S$,
\[
    |d_\theta(s,g)-d^\ast(s,g)|
    \leq
    \left(\epsilon+\frac{L\rho^2}{2\tau}\right)d^\ast(s,g)+\beta.
\]
\end{theorem}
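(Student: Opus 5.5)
We argue by induction on the number of $\tau$-steps needed to reach $\mathcal B_g$, combining the Bellman relation \eqref{eq:tau-bellman-reachability} from Lemma~\ref{thm:k-step-quasimetric} with a one-step estimate for $d_\theta$ obtained from the Hamiltonian residual and a Taylor bound. Set $\eta:=\epsilon+L\rho^2/(2\tau)<1$. On $\mathcal B_g$ the claim holds directly, since $|d_\theta-d^\ast|\le\beta$ there.

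\textbf{One-step estimate.} Fix $s\notin\mathcal B_g$ and $\delta\in\mathcal R_\tau(s)$. The segment $[s,s+\delta]$ lies in the region where $\nabla_s d_\theta(\cdot,g)$ is $L$-Lipschitz, and $\|\delta\|_2\le\rho$. Taylor's theorem with Lipschitz gradient therefore gives
\[
\bigl|d_\theta(s+\delta,g)-d_\theta(s,g)-\langle\nabla_s d_\theta(s,g),\delta\rangle\bigr|\le \tfrac{L}{2}\rho^2 .
\]
The residual bound $|H_\tau|\le\epsilon$ reads $\bigl|\sup_{\delta\in\mathcal R_\tau(s)}\langle-\nabla_s d_\theta,\delta\rangle-\tau\bigr|\le\epsilon\tau$, and the supremum is attained by compactness. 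Taking the infimum over $\delta$ of the Taylor expansion then yields
\[
\Bigl|\,d_\theta(s,g)-\tau-\min_{\delta\in\mathcal R_\tau(s)}d_\theta(s+\delta,g)\Bigr|\le \epsilon\tau+\tfrac{L\rho^2}{2}=\eta\tau .
\]
The only subtlety is that the minimizer of $d_\theta(s+\cdot,g)$ need not coincide with the maximizer of the linear form. The Taylor bound is uniform in $\delta$, so the inf of the expansion and the inf of the linear part differ by at most $L\rho^2/2$.

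\textbf{Propagation.} Since $s\notin\mathcal B_g$, we have $d^\ast(s,g)>\tau$, so \eqref{eq:tau-bellman-reachability} applies. Combined with the one-step estimate,
\[
|d_\theta(s,g)-d^\ast(s,g)|\le \eta\tau+\sup_{\delta\in\mathcal R_\tau(s)}|d_\theta(s+\delta,g)-d^\ast(s+\delta,g)|,
\]
using $|\inf f-\inf h|\le\sup|f-h|$. This sup ranges over all reachable endpoints, which is too crude, so we handle the two directions separately.

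\textbf{Upper bound.} Pick a near-optimal $\delta$ for $d^\ast$. This gives $d_\theta(s)\le(1+\eta)\tau+d_\theta(s+\delta)$ with $d^\ast(s+\delta)\le d^\ast(s)-\tau+\varepsilon'$. Inducting on $N=\lceil d^\ast(s,g)/\tau\rceil$ (the chain enters $\mathcal B_g$ after at most $N-1$ steps) gives $d_\theta(s)\le d^\ast(s)+\eta\,(d^\ast(s)-d^\ast(\text{entry}))+\beta\le(1+\eta)d^\ast(s)+\beta$.

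\textbf{Lower bound.} Take the minimizer $\delta$ for $d_\theta$: $d_\theta(s)\ge(1-\eta)\tau+d_\theta(s+\delta)$, and iterate along this $d_\theta$-greedy chain. The main obstacle is showing the chain reaches $\mathcal B_g$ and controlling its length relative to $d^\ast$. Each step lowers $d_\theta$ by at least $(1-\eta)\tau>0$ and $d_\theta\ge0$, so after at most $M\le d_\theta(s)/((1-\eta)\tau)$ steps the chain must enter $\mathcal B_g$; otherwise $d_\theta$ would become negative. Summing gives $d_\theta(s)\ge(1-\eta)M\tau+d_\theta(\text{entry})$. The chain itself is a feasible path of duration $M\tau$, so $d^\ast(s)\le M\tau+d^\ast(\text{entry})$. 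Hence
\[
d_\theta(s)\ge(1-\eta)\bigl(d^\ast(s)-d^\ast(\text{entry})\bigr)+d^\ast(\text{entry})-\beta\ge(1-\eta)d^\ast(s)-\beta ,
\]
using $d^\ast(\text{entry})\ge0$. This is where the hypothesis $\eta<1$ is used. Continuity and finiteness of $d_\theta$ make the minima well defined. Combining the upper and lower bounds gives the stated estimate.
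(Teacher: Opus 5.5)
Your proposal is correct and follows the paper's proof. It uses the same two-sided one-step estimate $\tau(1-\eta)\le d_\theta(s,g)-\min_{\delta\in\mathcal R_\tau(s)}d_\theta(s+\delta,g)\le\tau(1+\eta)$ from the Taylor bound and the Hamiltonian residual, the same telescoping upper bound along a feasible $\tau$-chain until it enters $\mathcal B_g$, and the same $d_\theta$-greedy lower-bound chain, which terminates because $d_\theta\ge0$ and $(1-\eta)\tau>0$. The only difference is cosmetic: for the upper bound the paper samples an arbitrary feasible goal-reaching trajectory at multiples of $\tau$ and then takes the infimum over its duration, which avoids your per-step $\varepsilon'$ slack. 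In your version, the entry count can be $N$ rather than $N-1$ when $d^\ast(s,g)/\tau$ is an integer, and you must let $\varepsilon'\to0$ at the end.
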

\begin{proof}
Fix $g\in\mathcal S$, and write $f(s)=d_\theta(s,g)$ and
$\eta=\epsilon+L\rho^2/(2\tau)<1$.
For $s\notin\mathcal B_g$ and $\delta\in\mathcal R_\tau(s)$,
the Lipschitz-gradient assumption gives
\[
    |f(s+\delta)-f(s)-\nabla f(s)^\top\delta|
    \leq \frac{L}{2}\|\delta\|_2^2
    \leq \frac{L\rho^2}{2}.
\]
Taking maxima and using the Hamiltonian residual bound yields
\[
    \tau(1-\eta)
    \leq f(s)-\min_{\delta\in\mathcal R_\tau(s)}f(s+\delta)
    \leq \tau(1+\eta).
\]
Continuity and compactness ensure that the minimum is attained.

For the upper bound, fix $s\notin\mathcal B_g$ and take any feasible goal-reaching trajectory of duration $T$. Sample it at times $0,\tau,2\tau,\ldots$ up to $T$, and let $b$ be its first sampled state in $\mathcal B_g$, reached after $N$ intervals. Such a state exists because the last sampled state has remaining goal-reaching time at most $\tau$. Since $d^\ast(b,g)\leq T-N\tau$, the upper inequality along these intervals gives
\[
    f(s)\leq(1+\eta)N\tau+f(b)
    \leq(1+\eta)N\tau+d^\ast(b,g)+\beta
    \leq(1+\eta)T+\beta.
\]
Taking the infimum over feasible durations $T$ yields
$f(s)\leq(1+\eta)d^\ast(s,g)+\beta$.

For the lower bound, start at $s_0=s$ and choose
$s_{i+1}\in\arg\min_{y\in s_i+\mathcal R_\tau(s_i)}f(y)$
until entering $\mathcal B_g$. Before entry,
$f(s_{i+1})\leq f(s_i)-\tau(1-\eta)$.
Since $f\geq0$ and $\tau(1-\eta)>0$, entry must occur after finitely many intervals. Let $b$ be the entry state after $N$ intervals. Concatenation gives $d^\ast(s,g)\leq N\tau+d^\ast(b,g)$, and hence
\[
    f(s)\geq(1-\eta)N\tau+f(b)
    \geq(1-\eta)N\tau+d^\ast(b,g)-\beta
    \geq(1-\eta)d^\ast(s,g)-\beta.
\]
Combining the bounds proves the claim outside $\mathcal B_g$. On $\mathcal B_g$, it follows directly from the assumed error bound.
\end{proof}

We note that the above theorem provides global value error bounds in terms of the constraint residual, the first-order approximation error, and the assumed accuracy on $\mathcal B_g$. While this bound gives some insight for the value recovery of our method in the idealized setting, this analysis should thus be interpreted only as conditional guarantee for the true reachable set. The practical feasibility and effectiveness of the method are more directly established via our experiments in Section~\ref{sec:exp}.
\section{Main Experiment Details}\label{app:exp}

\subsection{Benchmarks}

\textbf{Benchmarks and datasets.}
We use the state-based environments from \texttt{OGBench} \citep{park2025ogbench}, simulated in \texttt{MuJoCo} \citep{todorov2012mujoco}. Our main experiments cover \texttt{pointmaze}, \texttt{antmaze}, and \texttt{humanoidmaze} across \texttt{medium}, \texttt{large}, and \texttt{giant} layouts. 

In \texttt{pointmaze}, a point mass moves through the maze, with its 2-D position as the state and 2-D actions directly controlling planar movement. In \texttt{antmaze}, a quadrupedal ant walks toward the goal using 8 joint-torque controls and a 29-D state. In \texttt{humanoidmaze}, a humanoid must coordinate balance and walking using 21 joint-torque controls and a 69-D state. The latter two states include position, body configuration, and motion information. All action spaces are boxes, $\mathcal A=[-1,1]^m$, where $m$ is the action dimension. We use the officially released \texttt{navigate} and \texttt{stitch} training datasets for all three families, and \texttt{explore} for \texttt{antmaze-medium} and \texttt{antmaze-large}. The \texttt{navigate} datasets contain trajectories pursuing randomly sampled goals, \texttt{stitch} contains short goal-reaching trajectories requiring composition to solve longer tasks, and \texttt{explore} contains noisy exploratory trajectories generated from random directional commands. Table~\ref{tab:benchmark-details} summarizes observation and action dimensions and training dataset sizes. All methods use the same training datasets and number of training updates within each setting. Appendix~\ref{app:add} additionally reports results on \texttt{antsoccer-arena-navigate} and \texttt{antsoccer-medium-navigate}. These tasks require an ant to approach and dribble a ball to a goal location, in an open arena or a maze, respectively. Success depends on the ball's position. 

\begin{table}[h]
\centering
\small
\begin{tabular}{lrrl}
\toprule
Environment & $n$ & $m$ & Training transitions per dataset \\
\midrule
\texttt{pointmaze} & 2 & 2
& 1M (\texttt{navigate}, \texttt{stitch}) \\
\texttt{antmaze} & 29 & 8
& 1M (\texttt{navigate}, \texttt{stitch}); 5M (\texttt{explore}) \\
\texttt{humanoidmaze} & 69 & 21
& 2M (\texttt{medium}, \texttt{large}); 4M (\texttt{giant}) \\
\bottomrule
\end{tabular}
\caption{Observation/action dimensions and training dataset sizes. M denotes one million transitions; counts exclude terminal-only observations. $n$ and $m$ denote the full state and action spaces, respectively. The \texttt{humanoidmaze} counts apply to both dataset types.}
\label{tab:benchmark-details}
\end{table}

\textbf{Evaluations.}
We evaluate all the methods on the five prescribed \texttt{OGBench} tasks, with 50 episodes per task and the benchmark's success criterion and episode limits, following the standard evaluation protocol. Success rates (\%) are averaged across tasks for each training seed, and we report the mean and standard deviation over 10 seeds. Collision rates (\%) measure the percentage of evaluation steps with at least one \texttt{MuJoCo} contact between the agent and the domain boundary. Exact training budgets and hyperparameters are provided below.

\begin{figure}[t]
  \centering
  \setlength{\fboxsep}{0pt}\setlength{\fboxrule}{0.6pt}%
  \fbox{\includegraphics[width=0.24\linewidth]{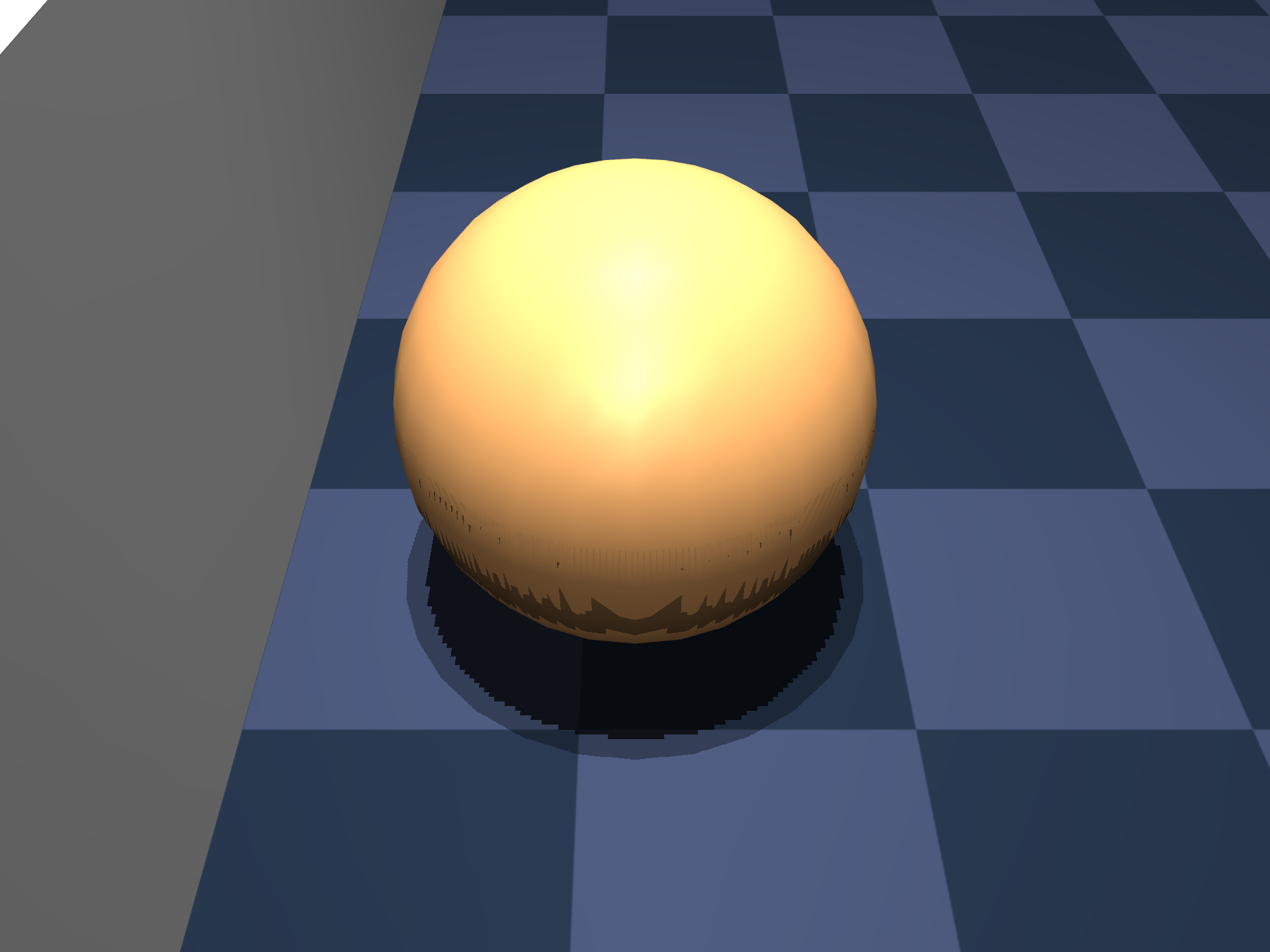}}\hspace{0.02\linewidth}%
  \fbox{\includegraphics[width=0.24\linewidth]{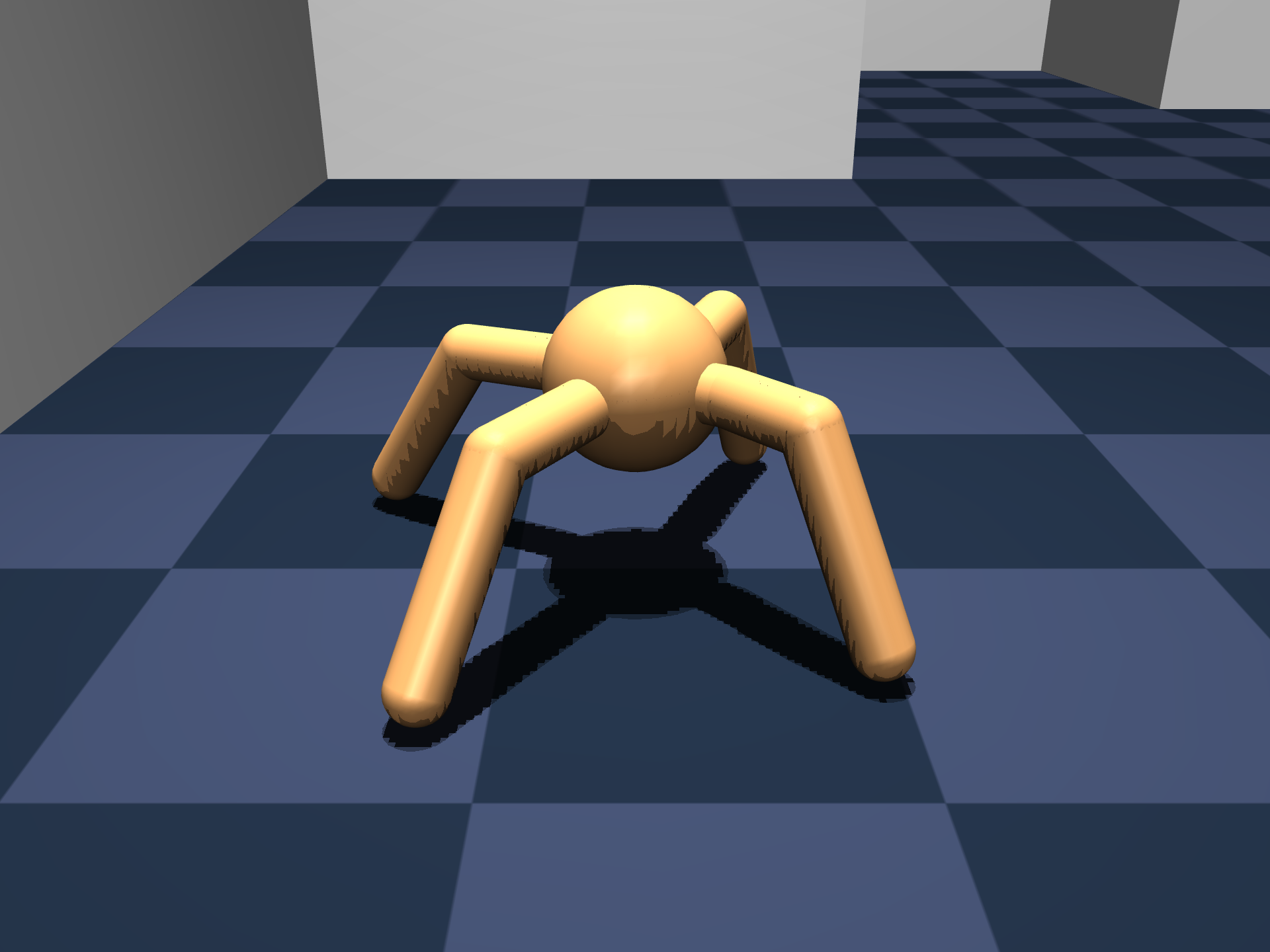}}\hspace{0.02\linewidth}%
  \fbox{\includegraphics[width=0.24\linewidth]{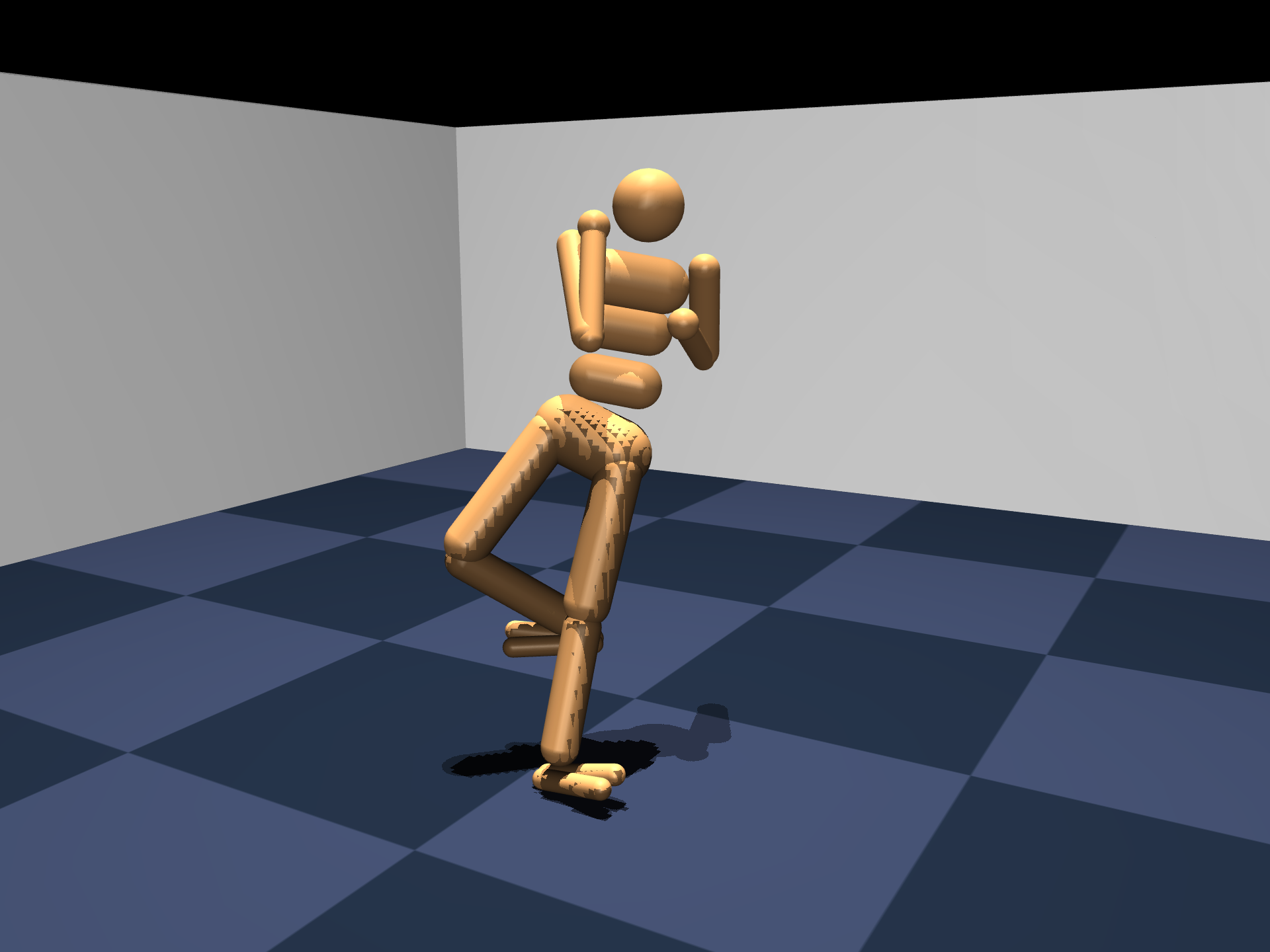}}
  \caption{The \texttt{pointmaze}, \texttt{antmaze}, and \texttt{humanoidmaze} agents of \texttt{OGBench}.}
  \label{fig:agents}
\end{figure}

\subsection{Reachability and Boundary Estimation}\label{app:c-est}

\textbf{State representations.}
For all three environment families, the critic $d_\theta$ uses the planar physical position $x_s=(x,y) \in \mathbb R^2$ of the full $\mathcal S$, with goals represented by the corresponding coordinates $x_g$, as we mention in Section~\ref{sec:offline-inst}. To estimate $\mathcal R_\tau^{\mathrm{dyn}}(s)$, we use reduced states $r_s$ containing these coordinates and additional dynamical variables. For \texttt{pointmaze}, $r_s=(x,y)\in\mathbb R^2$. For \texttt{antmaze}, $r_s=(x,y,v_x,v_y,\cos\psi,\sin\psi)\in\mathbb R^6$, where $(v_x,v_y)$ are planar translational velocities and $\psi$ is the heading angle obtained from the body's orientation quaternion. For \texttt{humanoidmaze}, $r_s=(x,y,v_x,v_y,o_1,o_2,o_3)\in\mathbb R^7$, where $(v_x,v_y)$ are root planar velocities and $(o_1,o_2,o_3)$ encode the world vertical direction in the torso frame. We propagate the dynamics in these reduced states and retain the $(x,y)$ displacements for the critic constraint. These coordinates and dynamical variables are selected using only offline observations, without additional environment interaction. We also use $x_s$ directly as the waypoint representation, avoiding the separate learned waypoint embedding used in HIQL and EikHiQRL \citep{park2023hiql,giammarino2026goal}.

\textbf{Estimating the local constraint geometry.}
To estimate $\mathcal R_\tau^{\mathrm{dyn}}(s)$,, for each environment family and dataset type of \texttt{OGBench}, we fit one reduced dynamics model a priori using \emph{only} the training dataset $\mathcal{D}$ and fix it throughout the training of the critic $d_\theta$. We form one-step pairs $(r_s,a,r_{s'})$ within each recorded trajectory $(s,a,s') \in \mathcal D$, excluding pairs that cross episode cutoffs. Then, we estimate the affine model parameters by minimizing ordinary least squares (OLS) \citep{ljung1999system,verhaegen2007filtering}:
\[
    (A_r,B_r,b_r)
    \in\arg\min_{A,B,b}
    \sum_{(s,a,s')}
    \|r_{s'}-Ar_s-Ba-b\|_2^2.
\]
For residuals $e_i=r_{s_i'}-A_r r_{s_i}-B_r a_i-b_r$, we then compute
\[
    C=\frac{1}{N}\sum_{i=1}^N e_i e_i^\top
      =Q\Lambda Q^\top,
    \qquad
    G_w=2\Lambda^{1/2}Q^\top.
\]
The empirical error envelope is given by $\{G_w^\top\xi:\|\xi\|_\infty\leq1\}$, with zonotope generators spanning two empirical standard deviations along each principal residual direction. For \texttt{pointmaze}, we fit transitions unaffected by boundary contact. At horizon $\tau=k\Delta t$, the propagated center in the reduced state space is obtained as
\[
    A_r^k r_s+\sum_{j=0}^{k-1}A_r^j b_r,
\]
and the generators comprise the action blocks
$A_r^{k-1}B_r,\ldots,B_r$ and error blocks
$A_r^{k-1}G_w^\top,\ldots,G_w^\top$, with coefficients ranging independently over $[-1,1]$ \citep{girard2005reachability,althoff2021set}.
We retain the planar-position entries of the center and corresponding generator rows, then subtract $x_s$ from the center to obtain $c_s$ and $G_k$ in \eqref{eq:estimated-rk}. The generator matrix is shared across states for each fitted model and horizon, while $c_s$ depends on the current reduced state. These quantities are precomputed where possible, and the critic constraint is evaluated through the closed-form support in \eqref{eq:zonotope-support}.

To estimate boundary geometry and outward unit normals, we use grid resolutions of at most $0.5$ and one or two morphological closing iterations with a four-neighbor structuring element for each layout. We smooth the signed distance field with a Gaussian of standard deviation one grid cell, compute gradients by central differences, and exclude points with $\Vert\nabla\Phi\Vert_2\leq10^{-2}$.

To determine the horizon $\tau:=k\Delta t$ before critic training, we heuristically compare the control-generated part of $G_k$ with the uncontrolled displacement $c_s$, both obtained from the dynamics fitted to offline data, drawing on set-based reachability analysis \citep{blanchini2008set,althoff2021set,girard2005reachability}. For the action-generating columns of $G_k^{\mathrm{ctrl}}$ and the center $c_s$ (evaluated with zero initial position) of the fitted reachable set above, we compute the ratio as
\[
    \chi_\tau := \frac{\min_{\|v\|_2=1}\|(G_k^{\mathrm{ctrl}})^\top v\|_1}
    {\operatorname{Quan}_{s\sim\mathcal D}\|c_s\|_2},
\]
Quantile is taken at 0.9, and we use $\chi_\tau\in(0.5,1.5)$ as a heuristic range of comparable control and drift magnitudes. With this, we use $k=1$ in \texttt{pointmaze} (zero drift), $k=12$ in \texttt{antmaze} and $k=48$ in \texttt{humanoidmaze}. These horizons are fixed across layouts and dataset types within each environment family before the critic training.

\subsection{Training and Inference for Critic and Policy}

\textbf{Learning the \ours{} critic.}
To train our critic $d_\theta$, we implement \eqref{eq:normalized-objective} using soft constraint penalties. Let $\Delta_\theta(s,g)$ denote the predicted descent on the left-hand side of \eqref{eq:zonotope-support}. Our \ours{} objective is
\begin{equation}
\begin{aligned}
    \mathcal L_{\ours{}}(\theta)
    :={}&-\mathbb E_{s,g}\left[\phi(d_\theta(s,g))\right]\\
    &+w_{\mathrm{in}}\,\mathbb E_{s,g}\left[(\Delta_\theta(s,g)-1)^2\right]\\
    &+w_b\,\mathbb E_b\left[
    \operatorname{ReLU}\left(
    -\widehat\nu(x_s)^\top\nabla_{x_s}d_\theta(s,g)
    \right)^2\right],
\end{aligned}
\end{equation}
where $\phi(z)$ is the monotone distance transform from QRL \citep[Eq.~(40)]{wang2023optimal}. The parameters $w_{\mathrm{in}},w_b>0$ control the interior and boundary penalty weights, and $\mathbb E_b$ averages over sampled boundary $(s,g)$ pairs. We follow EikHiQRL's $(s,g)$ sampling for \texttt{antmaze} and \texttt{humanoidmaze} \citep{giammarino2026goal}, while using random dataset goals for the \texttt{pointmaze} critic. Across all settings, the boundary penalty uses sampled states within the estimated boundary band. We use exact zonotopic support from \eqref{eq:zonotope-support}, except in \texttt{pointmaze}, where we use a smooth log-sum-exp approximation and anneal its temperature from $1$ to $0.05$ during training, which converges to the exact support as the temperature tends to zero.

\textbf{Policy training.}
We train policies by advantage-weighted regression (AWR) \citep{peng2019advantage}, using HIQL's simplified advantage estimates \citep{park2023hiql}. For $k=1$, the policy $\pi_\psi(a\mid x_s,x_g)$ predicts primitive actions and is trained on recorded transitions:
\[
\begin{aligned}
    A(s,s',g)&=d_\theta(s,g)-d_\theta(s',g),\\
    \mathcal L_\pi(\psi)
    &=-\mathbb E_{s,a,s',g}\left[
    \min\{\exp(\beta A(s,s',g)),C\}
    \log\pi_\psi(a\mid x_s,x_g)\right].
\end{aligned}
\]

For $k>1$, $\pi_\psi(w\mid x_s,x_g)$ predicts waypoints through offsets from $x_s$. Its targets are $k$-step hindsight waypoints $w=x_{s_{t+k}}$, capped at the trajectory end or the sampled future goal's time, whichever comes first:
\[
\begin{aligned}
    A^h(s,w,g)&=d_\theta(x_s,x_g)-d_\theta(w,x_g),\\
    \mathcal L_{\pi_h}(\psi)
    &=-\mathbb E_{s,w,g}\left[
    \min\{\exp(\beta_h A^h(s,w,g)),C\}
    \log\pi_\psi(w\mid x_s,x_g)\right].
\end{aligned}
\]
A separate low-level value function takes the full state $s$ and coordinate waypoint $w$. Its two heads use action-free, goal-conditioned expectile TD learning with Polyak target networks, following HIQL and EikHiQRL \citep{kostrikov2021offline,park2023hiql,giammarino2026goal}, without a learned waypoint embedding or an additional Eikonal penalty on the low-level value. Writing $V_\eta$ for the mean of these heads, we train the action policy on recorded actions and $k$-step waypoints truncated at the trajectory end:
\[
\begin{aligned}
    A^\ell(s,s',w)&=V_\eta(s',w)-V_\eta(s,w),\\
    \mathcal L_{\pi_\ell}
    &=-\mathbb E_{s,a,s',w}\left[
    \min\{\exp(\beta_\ell A^\ell(s,s',w)),C\}
    \log\pi_\ell(a\mid s,w)\right].
\end{aligned}
\]
All advantage weights are detached during policy updates, with $C=100$. Following EikHiQRL, policy goals are uniformly sampled future states for \texttt{navigate}, an equal mixture of future and random dataset states for \texttt{stitch}, and random dataset states for \texttt{explore}. Low-level value training uses the critic-goal sampling described above. These policy-training components retain the corresponding baseline settings; we use coordinate waypoints directly and omit the additional Eikonal penalty on the low-level value function.

\begin{algorithm}[h]
\caption{\ours{} policy execution}
\label{alg:inference}
\begin{algorithmic}[1]
\Require Initial state $s_0 \in \mathcal S$, goal $g \in \mathcal S$, horizon $k$, episode limit $T_{\max}$
\Require Trained actor $\pi_\psi$ for \ours{} critic $d_\theta$ and, for $k>1$, low-level actor $\pi_\ell$
\State Extract goal coordinates $x_g$
\For{$t=0,\ldots,T_{\max}-1$}
    \State Extract $x_{s_t}$ from $s_t$
    \If{$k=1$}
        \State $a_t\gets\operatorname{clip}(\mu_\psi(x_{s_t},x_g),-1,1)$
        \Comment{Mean primitive action}
    \Else
        \State $w_t\gets x_{s_t}+\operatorname{clip}(\mu_\psi(x_{s_t},x_g),-10,10)$
        \Comment{Mean offset to waypoint}
        \State $a_t\gets\operatorname{clip}(\mu_\ell(s_t,w_t),-1,1)$
        \Comment{Waypoint-conditioned action}
    \EndIf
    \State Execute $a_t$ and observe $s_{t+1}$
    \If{the goal is reached or the episode terminates or is truncated}
        \State \textbf{break}
    \EndIf
\EndFor
\end{algorithmic}
\end{algorithm}

\textbf{Inference.}
For $k=1$, the trained policy $\pi_\psi(a\mid x_s,x_g)$ directly predicts primitive actions. For $k>1$, the waypoint policy $\pi_\psi(w\mid x_s,x_g)$ predicts an offset from $x_s$, and the low-level policy $\pi_\ell(a\mid s,w)$ maps the full observation and resulting waypoint to an action. We use mean outputs, denoted by $\mu_\psi$ and $\mu_\ell$, with componentwise clipping. Waypoints are recomputed at every environment step. Evaluation ends upon goal achievement, environment termination, or the benchmark's episode time limit. Refer to Algorithm~\ref{alg:inference}.

\subsection{Baselines and their Implementation}
We compare against quasimetric methods (QRL, EikHiQRL) and non-quasimetric methods (GCIQL, CRL, HIQL, SAW, OTA, GAS). GCIQL~\citep{kostrikov2021offline} learns a goal-conditioned value by expectile regression. HIQL~\citep{park2023hiql} learns an action-free value and extracts subgoal and low-level policies from it. SAW~\citep{zhou2026flattening} trains a single flat policy whose actions for distant goals are regressed toward those of a short-horizon policy. CRL~\citep{eysenbach2022contrastive} trains the critic as a contrastive classifier over future states. OTA~\citep{ahn2026option} learns the high-level value over temporally abstracted transitions. GAS~\citep{baek2025graph} plans over a graph of key states in a learned temporal-distance space. Among the qausimetric methods, QRL~\citep{wang2023optimal} learns an IQE critic with transition-cost constraints and EikHiQRL~\citep{giammarino2026goal} adds an Eikonal gradient penalty and a hierarchical actor of HIQL.

All methods are reimplemented in \texttt{PyTorch} that follow each method's released \texttt{JAX} code, sharing the same training loop, evaluation protocol, and seeds. All baselines including our method are written in \texttt{PyTorch} and are compiled before running. Hyperparameters follow the authors' per-environment settings when available and are otherwise tuned, all following the same sweep protocol.

\subsection{Training and Hyperparameters}

All networks use three hidden layers of width $512$ with GELU activations. Critics additionally use LayerNorm. The quasimetric critic uses IQE \citep{wang2022improved} with a $512$-dimensional latent representation, organized into $64$ components of size $8$. Our QRL and EikHiQRL baselines use the same IQE critic architecture. Actors parameterize diagonal Gaussian distributions with fixed unit standard deviation, and the low-level value function has two heads. We optimize the networks with \texttt{Adam} and batch size $1024$. The inherited policy-training settings follow EikHiQRL: the low-level discount is $\gamma = 0.995$, the expectile is $0.7$, and the Polyak update coefficient is $0.005$. AWR coefficients are $\alpha = 3$, except for \texttt{antmaze-explore}, which uses $\alpha = 10$. Our additional hyper-parameters are \ours{}'s critic loss weights, $w_{\mathrm{in}}$ and $w_b$. the resulting values of $w_{\mathrm{in}}$ and $w_B$ are in  Table~\ref{tab:training-settings}. All methods receive the same number of training updates within each setting: $10^5$ training steps for \texttt{pointmaze} and \texttt{antmaze}, except $10^6$ steps for \texttt{antmaze-explore}, and $10^6$ training steps for \texttt{humanoind} and \texttt{antsoccer}. 

\begin{table}[h]
\centering
\small
\setlength{\tabcolsep}{4pt}
\begin{tabular}{llcc}
\toprule
Environment & Dataset & $w_{\mathrm{in}}$ & $w_b$ \\
\midrule
\texttt{pointmaze} & \texttt{navigate}
& $(0.5,1,1)\times10^2$ & $(1,1,5)\times10^1$ \\
& \texttt{stitch}
& $(2.5,2,2)\times10^2$ & $(0.5,1,1.5)\times10^1$ \\
\texttt{antmaze} & \texttt{navigate}
& $(3,3,3)$ & $(1,1,3)\times10^2$ \\
& \texttt{stitch}
& $(2.5,2.5,5)\times10^{-1}$ & $(1,1,0.5)\times10^2$ \\
& \texttt{explore}
& $(2.5,2.5)\times10^{-1}$ & $(1,0.5)\times10^2$ \\
\texttt{humanoidmaze} & \texttt{navigate}
& $(5,5,5)$ & $(3,3,5)\times10^2$ \\
& \texttt{stitch}
& $(0.5,0.5,1)$ & $(1,1,1)\times10^2$ \\
\bottomrule
\end{tabular}
\caption{Selected penalty weights $w_{\mathrm{in}}$ and $w_b$, ordered as (\texttt{medium}, \texttt{large}, \texttt{giant}). For \texttt{explore}, entries correspond to (\texttt{medium}, \texttt{large}).}
\label{tab:training-settings}
\end{table}
\section{Additional Experiments Details and Results}\label{app:add}

We include additional experimental results and details in this section. Specifically, we provide the full result tables supplementing the main text, experimental details and additional results for \texttt{carmaze} and \texttt{antsoccer}, and runtime comparison between \ours{} and our reimplemented \texttt{PyTorch} baselines. 

\begin{table}[h]
\centering
\scriptsize
\setlength{\tabcolsep}{4.5pt}
\renewcommand{\arraystretch}{1.1}
\begin{tabular}{ll l llllll @{\hspace{10pt}} lll}
\toprule
\multirow{2}{*}{\textbf{Env}} & \multirow{2}{*}{\textbf{Data}} & \multirow{2}{*}{\textbf{Size}}
& \multicolumn{6}{c}{\textbf{Other GCRL}} & \multicolumn{3}{c}{\textbf{Quasimetric}} \\
\cmidrule(lr){4-9}\cmidrule(lr){10-12}
& & & \multicolumn{1}{c}{CRL} & \multicolumn{1}{c}{SAW} & \multicolumn{1}{c}{GCIQL} & \multicolumn{1}{c}{HIQL} & \multicolumn{1}{c}{OTA} & \multicolumn{1}{c}{GAS}
& \multicolumn{1}{c}{QRL} & \multicolumn{1}{c}{EikHiQRL} & \multicolumn{1}{c}{\textbf{\ours}} \\
\midrule

\multirow{6}{*}{\rotatebox[origin=c]{90}{\texttt{carmaze}}}
& \multirow{3}{*}{\texttt{nav}}
& \texttt{medium}   & $50 \pm 5$ & $52 \pm 7$ & $28 \pm 4$ & $53 \pm 5$ & $63 \pm 7$  & $81 \pm 5$ & $36 \pm 6$ & $72 \pm 8$ & \best{$90 \pm 3$} \\
& & \texttt{large}  & $13 \pm 4$ & $2 \pm 2$  & $4 \pm 1$  & $19 \pm 8$ & $32 \pm 14$ & $67 \pm 7$ & $4 \pm 4$ & $45 \pm 15$ & \best{$81 \pm 12$} \\
& & \texttt{giant}  & $0 \pm 0$  & $0 \pm 0$  & $0 \pm 0$  & $1 \pm 1$  & $3 \pm 3$   & $40 \pm 8$ & $1 \pm 1$ & $13 \pm 9$  & \best{$60 \pm 14$} \\
\cmidrule(lr){2-12}
& \multirow{3}{*}{\texttt{sti}}
& \texttt{medium}   & $28 \pm 5$ & $27 \pm 11$ & $10 \pm 2$ & $23 \pm 7$ & $60 \pm 6$  & $74 \pm 6$  & $30 \pm 6$ & $69 \pm 9$ & \best{$82 \pm 6$} \\
& & \texttt{large}  & $0 \pm 0$  & $1 \pm 1$   & $0 \pm 1$  & $0 \pm 1$  & $14 \pm 6$  & $64 \pm 10$ & $1 \pm 1$ & $26 \pm 9$ & \best{$78 \pm 12$} \\
& & \texttt{giant}  & $0 \pm 0$  & $0 \pm 0$   & $0 \pm 0$  & $0 \pm 0$  & $0 \pm 0$   & $31 \pm 8$  & $0 \pm 0$ & $5 \pm 7$  & \best{$45 \pm 17$} \\
\bottomrule
\end{tabular}
\caption{\textbf{Success rates (\%)} on \texttt{carmaze} at the final checkpoint (mean $\pm$ std over 10 seeds; 50 episodes per goal). \texttt{nav} and \texttt{sti} denote \texttt{navigate} and \texttt{stitch}. Means at least $95\%$ of the best are \textbf{bold}; the best is \underline{underlined}.}
\label{tab:carmaze}
\end{table}

\textbf{Additional details on \texttt{carmaze}.} We elaborate on the \texttt{carmaze} environment and datasets discussed and evaluated in Section~\ref{sec:exp} and Table~\ref{tab:carmaze}. We integrate the \texttt{Safety-Gymnasium Car} robot \citep{ji2023safety}, simulated in \texttt{MuJoCo} \citep{todorov2012mujoco}, into \texttt{OGBench}'s official \texttt{medium}, \texttt{large}, and \texttt{giant} layouts \citep{park2025ogbench}. The 22-D state space $\mathcal S$ consists of the 3-D chassis position, two 4-D orientation quaternions for the chassis and caster, and 11 generalized velocities. The two unbounded wheel angles are omitted, while their angular velocities are retained. Its 2-D actions are in $\mathcal A := [-1,1]^2$ and control the two wheel motors with torque scale $0.02$. We set wheel damping to $0.0005$ and use a $0.1$\,s control interval, comprising 25 simulator steps of $0.004$\,s. Following \texttt{OGBench}'s collection pipeline, we train an SAC locomotion expert for $400$k steps, then generate trajectories using waypoint guidance with Gaussian action noise. Each \texttt{navigate} dataset contains 1,000 trajectories, with approximately $0.6$M, $1$M, and $1.6$M samples for \texttt{medium}, \texttt{large}, and \texttt{giant}, respectively. Each \texttt{stitch} dataset contains 5,000 short trajectories, totaling approximately $1$M samples per layout.

For estimating the reachable set and estimate boundary normals, \ours{} uses critic goal-coordinates $x_s=(x,y)\in\mathbb{R}^2$ and reduced states $r_s=(x,y,v_x,v_y,\cos\psi,\sin\psi)\in\mathbb{R}^6$, as for \texttt{antmaze} (see Appendix~\ref{app:exp}). We also fix $k=16$ across all layouts and dataset types, consistent with the horizon-selection rationale in Appendix~\ref{app:c-est}. The remaining architecture, estimation, and policy-training details follow Appendix~\ref{app:exp}. Training uses $10^5$ training updates; GAS uses $10^5$ updates in each of its temporal-distance and policy-training stages, with graph construction between them. We tune method-specific hyperparameters for \ours{} and all baselines with equal effort. We initialize experiments with these selected configurations, additionally sweeping QRL, HIQL, OTA, and EikHiQRL and conducting a local search for GAS. For \ours{} specifically, we sweep $w_{\mathrm{in}}\in\{0.25,1,5,100\}$ and $w_b\in\{1,10,100\}$ and use $(w_{\mathrm{in}},w_b) = (5,100)$ for all \texttt{navigate} layouts, $(1,10)$ for \texttt{medium}/\texttt{large-stitch}, and $(0.25,1)$ for \texttt{giant-stitch}. Final evaluation follows Appendix~\ref{app:exp}. The full comparison in Table~\ref{tab:carmaze} supports the main-text results, with \ours{} achieving the highest mean success in all six settings.

\begin{table}[t]
\centering
\scriptsize
\setlength{\tabcolsep}{4.5pt}
\renewcommand{\arraystretch}{1.1}
\begin{tabular}{ll l llllll @{\hspace{10pt}} lll}
\toprule
\multirow{2}{*}{\textbf{Env}} & \multirow{2}{*}{\textbf{Data}} & \multirow{2}{*}{\textbf{Size}}
& \multicolumn{6}{c}{\textbf{Other GCRL}} & \multicolumn{3}{c}{\textbf{Quasimetric}} \\
\cmidrule(lr){4-9}\cmidrule(lr){10-12}
& & & \multicolumn{1}{c}{CRL} & \multicolumn{1}{c}{SAW} & \multicolumn{1}{c}{GCIQL} & \multicolumn{1}{c}{HIQL} & \multicolumn{1}{c}{OTA} & \multicolumn{1}{c}{GAS}
& \multicolumn{1}{c}{QRL} & \multicolumn{1}{c}{EikHiQRL} & \multicolumn{1}{c}{\textbf{\ours}} \\
\midrule

\multirow{6}{*}{\rotatebox[origin=c]{90}{\texttt{pointmaze}}}
& \multirow{3}{*}{\texttt{nav}}
& \texttt{medium}  & $34 \pm 5$ & $1 \pm 1$   & $56 \pm 2$ & $3 \pm 2$   & $2 \pm 1$  & $1 \pm 1$ & $33 \pm 7$ & $2 \pm 4$ & $7 \pm 6$ \\
& & \texttt{large} & $49 \pm 9$ & $27 \pm 12$ & $82 \pm 2$ & $32 \pm 7$  & $6 \pm 2$  & $0 \pm 0$ & $40 \pm 7$ & $0 \pm 1$ & $5 \pm 4$ \\
& & \texttt{giant} & $58 \pm 8$ & $50 \pm 13$ & $93 \pm 7$ & $59 \pm 12$ & $14 \pm 5$ & $2 \pm 2$ & $43 \pm 9$ & $9 \pm 1$ & $10 \pm 7$ \\
\cmidrule(l){2-12}
& \multirow{3}{*}{\texttt{sti}}
& \texttt{medium}  & $72 \pm 12$ & $31 \pm 8$  & $78 \pm 3$ & $33 \pm 8$  & $13 \pm 3$  & $6 \pm 6$ & $32 \pm 7$  & $2 \pm 4$   & $9 \pm 7$ \\
& & \texttt{large} & $65 \pm 13$ & $55 \pm 8$  & $71 \pm 3$ & $68 \pm 10$ & $20 \pm 10$ & $3 \pm 5$ & $37 \pm 4$  & $1 \pm 1$   & $7 \pm 5$ \\
& & \texttt{giant} & $72 \pm 12$ & $57 \pm 15$ & $81 \pm 8$ & $76 \pm 10$ & $57 \pm 14$ & $0 \pm 0$ & $47 \pm 12$ & $14 \pm 15$ & $22 \pm 8$ \\
\midrule
\multirow{8}{*}{\rotatebox[origin=c]{90}{\texttt{antmaze}}}
& \multirow{3}{*}{\texttt{nav}}
& \texttt{medium}  & $8 \pm 1$  & $4 \pm 1$  & $11 \pm 1$ & $4 \pm 1$  & $3 \pm 0$ & $2 \pm 0$ & $5 \pm 1$  & $3 \pm 0$ & $1 \pm 0$ \\
& & \texttt{large} & $9 \pm 1$  & $7 \pm 1$  & $11 \pm 1$ & $6 \pm 1$  & $3 \pm 0$ & $3 \pm 1$ & $7 \pm 2$  & $5 \pm 0$ & $2 \pm 0$ \\
& & \texttt{giant} & $14 \pm 1$ & $15 \pm 1$ & $14 \pm 1$ & $14 \pm 1$ & $9 \pm 1$ & $8 \pm 3$ & $14 \pm 1$ & $6 \pm 1$ & $4 \pm 1$ \\
\cmidrule(l){2-12}
& \multirow{3}{*}{\texttt{sti}}
& \texttt{medium}  & $15 \pm 1$ & $9 \pm 1$ & $12 \pm 1$ & $9 \pm 1$ & $3 \pm 0$ & $2 \pm 0$ & $6 \pm 1$ & $4 \pm 0$ & $1 \pm 0$ \\
& & \texttt{large} & $17 \pm 1$ & $15 \pm 1$ & $11 \pm 1$ & $15 \pm 1$ & $8 \pm 1$ & $3 \pm 1$ & $10 \pm 1$ & $5 \pm 1$ & $1 \pm 0$ \\
& & \texttt{giant} & $16 \pm 2$ & $19 \pm 1$ & $13 \pm 1$ & $15 \pm 1$ & $13 \pm 1$ & $5 \pm 1$ & $16 \pm 1$ & $7 \pm 1$ & $4 \pm 1$ \\
\cmidrule(l){2-12}
& \multirow{2}{*}{\texttt{exp}}
& \texttt{medium}  & $8 \pm 3$ & $17 \pm 2$ & $23 \pm 1$ & $15 \pm 2$ & $2 \pm 1$ & $4 \pm 1$ & $21 \pm 2$ & $13 \pm 3$ & $1 \pm 1$ \\
& & \texttt{large} & $6 \pm 2$ & $18 \pm 1$ & $21 \pm 1$ & $17 \pm 2$ & $5 \pm 2$ & $5 \pm 1$ & $19 \pm 1$ & $11 \pm 2$ & $1 \pm 1$ \\
\midrule
\multirow{6}{*}{\rotatebox[origin=c]{90}{\texttt{humanoid}}}
& \multirow{3}{*}{\texttt{nav}}
& \texttt{medium}  & $21 \pm 1$ & $10 \pm 1$ & $25 \pm 1$ & $9 \pm 1$  & $8 \pm 1$ & $7 \pm 2$  & $16 \pm 4$ & $9 \pm 1$  & $3 \pm 0$ \\
& & \texttt{large} & $23 \pm 2$ & $13 \pm 1$ & $24 \pm 1$ & $15 \pm 2$ & $9 \pm 1$ & $9 \pm 1$  & $18 \pm 3$ & $13 \pm 2$ & $10 \pm 3$ \\
& & \texttt{giant} & $33 \pm 4$ & $17 \pm 2$ & $28 \pm 2$ & $23 \pm 2$ & $9 \pm 1$ & $11 \pm 1$ & $25 \pm 2$ & $11 \pm 1$ & $7 \pm 1$ \\
\cmidrule(l){2-12}
& \multirow{3}{*}{\texttt{sti}}
& \texttt{medium}  & $22 \pm 1$ & $9 \pm 1$  & $21 \pm 1$ & $8 \pm 1$  & $7 \pm 0$  & $6 \pm 1$  & $15 \pm 2$ & $10 \pm 1$ & $4 \pm 1$ \\
& & \texttt{large} & $26 \pm 2$ & $12 \pm 1$ & $23 \pm 2$ & $13 \pm 1$ & $10 \pm 1$ & $9 \pm 1$  & $18 \pm 2$ & $13 \pm 1$ & $6 \pm 0$ \\
& & \texttt{giant} & $32 \pm 3$ & $25 \pm 2$ & $26 \pm 2$ & $26 \pm 2$ & $12 \pm 1$ & $13 \pm 1$ & $27 \pm 2$ & $13 \pm 1$ & $7 \pm 1$ \\
\bottomrule
\end{tabular}
\caption{\textbf{Collision rate (\%)} ($\downarrow$): percentage of time steps with agent-boundary contact in \texttt{MuJoCo} (mean $\pm$ std over 10 seeds). \texttt{nav}, \texttt{sti}, \texttt{exp} are shorthands for \texttt{navigate}, \texttt{stitch}, \texttt{explore} datasets of \texttt{OGBench}, respectively.}
\label{tab:collision}
\end{table}

\textbf{Supplementary experimental results.}
In Table~\ref{tab:collision}, the full collision results are consistent with the main-text comparison on \texttt{antmaze} and \texttt{humanoidmaze}. \ours{} achieves lower mean collision rates than QRL and EikHiQRL across settings, and generally lower or comparable rates relative to OTA and GAS. In \texttt{pointmaze}, however, the point mass can touch corners or slide along walls while progressing toward the goal, so successful trajectories can still accumulate contact steps, which is consistent with the reported results in Table~\ref{tab:main}.

\begin{table}[t]
\centering
\small
\setlength{\tabcolsep}{5pt}
\begin{tabular}{l l ll @{\hspace{10pt}} lll}
\toprule
\multirow{2}{*}{\textbf{Env}} & \multirow{2}{*}{\textbf{Size}}
& \multicolumn{2}{c}{\textbf{Other GCRL}} & \multicolumn{3}{c}{\textbf{Quasimetric}} \\
\cmidrule(lr){3-4}\cmidrule(lr){5-7}
& & \multicolumn{1}{c}{CRL} & \multicolumn{1}{c}{HIQL}
& \multicolumn{1}{c}{QRL} & \multicolumn{1}{c}{EikHiQRL} & \multicolumn{1}{c}{\textbf{\ours}} \\
\midrule
\multirow{2}{*}{\texttt{antsoccer-nav}}
& \texttt{arena}  & $24 \pm 2$ & \second{$60 \pm 4$} & $10 \pm 3$ & \second{$61 \pm 5$} & \best{$65 \pm 3$} \\
& \texttt{medium} & $4 \pm 2$ & $13 \pm 3$ & $2 \pm 2$ & $13 \pm 1$ & \best{$17 \pm 3$} \\
\bottomrule
\end{tabular}
\caption{\textbf{Success rates (\%)} on \texttt{OGBench}'s \texttt{antsoccer-navigate} after $10^6$ training updates (mean $\pm$ std). For reference, baseline results are taken from \citet{giammarino2026goal} under the same training budget.}
\label{tab:antsoccer}
\end{table}

We additionally evaluate \ours{} on \texttt{OGBench}'s \texttt{antsoccer} tasks \citep{park2025ogbench}, where goal reaching requires moving an external object through contact. An ant must dribble a ball toward a target in the open \texttt{arena} or the \texttt{medium} maze. The 42-D state combines the 29-D ant state and 13-D ball state, with eight joint-torque actions in $\mathcal A=[-1,1]^8$. We use $10^6$ training updates with $k=24$, otherwise following Appendix~\ref{app:exp}. The critic coordinates $x_s\in\mathbb R^4$ contain the planar positions of the ant and ball. The reduced state $r_s\in\mathbb R^{12}$ contains the ant's planar position, planar velocity, and heading sine/cosine, together with the ball's 3-D position and linear velocity. Following the benchmark's goal observations, both goal-position blocks use the target location, while evaluation success depends only on the ball reaching within $0.5$\,m of the target. As in Appendix~\ref{app:c-est}, we fit the affine dynamics model, projecting the propagated reachable sets onto $x_s$. Table~\ref{tab:antsoccer} reports mean success rates of $65\%$ and $17\%$, compared with $61\%$ and $13\%$ for EikHiQRL using the published, best-of-10 evaluations at 1M \citep{giammarino2026goal}. These results extend our evaluation to goal-reaching tasks involving object interaction.

In Figure~\ref{fig:training-vs-success}, we also compare success rate against wall-clock training time on the \texttt{humanoidmaze} environment, using our compiled \texttt{PyTorch} implementations on a single \texttt{A100} GPU for $10^6$ training updates. Success rates are averaged across the \texttt{medium}, \texttt{large}, and \texttt{giant} layouts, separately for \texttt{navigate} and \texttt{stitch}. Overall, these results complement the main-text comparisons by showing competitive success together with practical training efficiency.

\begin{figure}[h]
\centering
\includegraphics[width=1.0\linewidth]{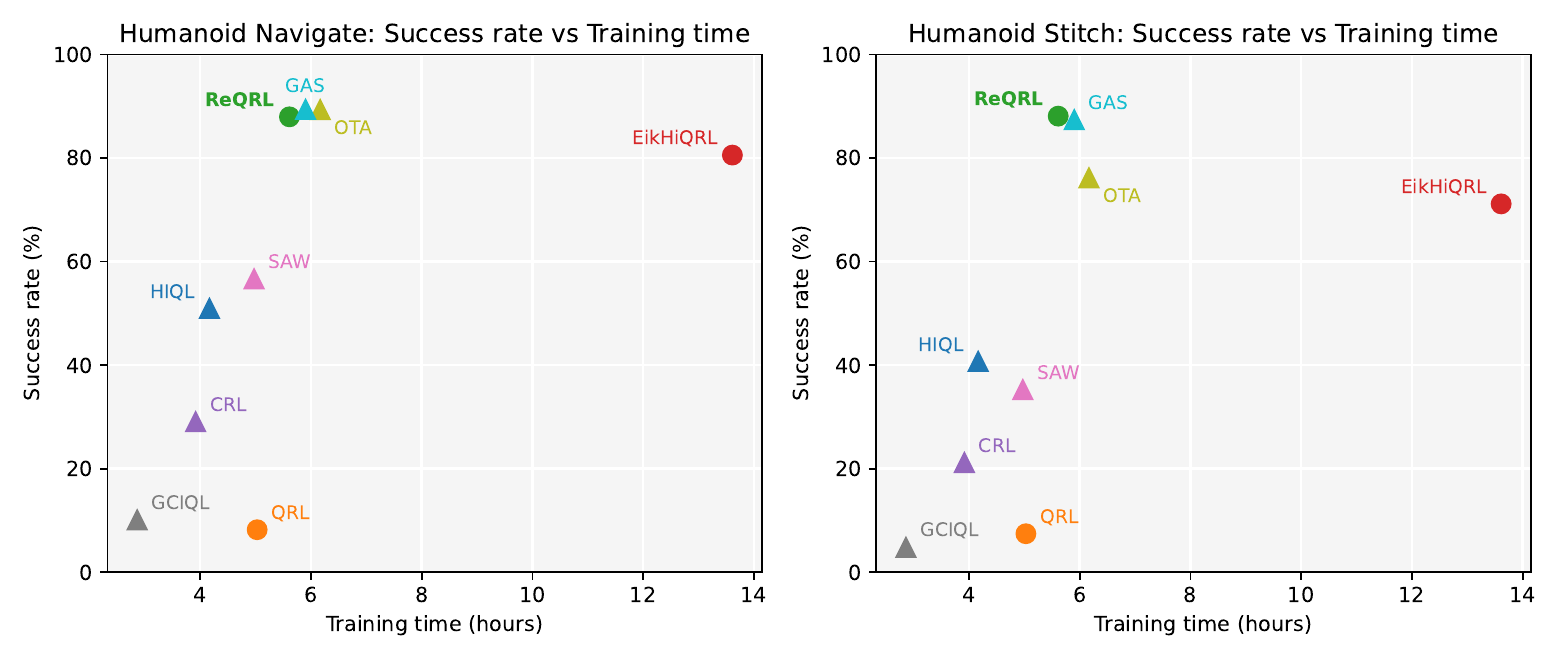}
\caption{Success rate averaged across maze sizes versus training time on \texttt{humanoidmaze} \texttt{navigate} and \texttt{stitch}. Wall-clock time covers $10^6$ training updates on one \texttt{A100} using compiled \texttt{PyTorch} implementations. Circles denote quasimetric methods; triangles denote other GCRL baselines.}

\label{fig:training-vs-success}
\end{figure}

\end{document}